\documentclass[pdflatex,sn-nature]{sn-jnl}

\usepackage[english]{babel}
\usepackage{anyfontsize}
\usepackage{graphicx}%
\usepackage{multirow}%
\usepackage{threeparttable}
\usepackage{amsmath,amssymb,amsfonts}%
\usepackage{amsthm}%
\usepackage{latexsym,stmaryrd}
\usepackage{mathrsfs}%
\usepackage[title]{appendix}%
\usepackage{xcolor}%
\usepackage{textcomp}%
\usepackage{manyfoot}%
\usepackage{booktabs}%
\usepackage[algo2e,english,boxruled,noline,noend,linesnumbered]{algorithm2e}
\usepackage{mathtools}
\usepackage{xspace}
\usepackage[all]{xy}
\usepackage{pgfplots}
\pgfplotsset{compat=1.18}
\usepgfplotslibrary{statistics}
\usepackage{cleveref}
\usepackage{float}
\renewcommand{\orcidlogo}{%
  \includegraphics[width=10pt]{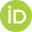}%
}
\renewcommand{\orcid}[1]{\href{https://orcid.org/#1}{\,\orcidlogo\,}}

\newcommand{\ignore}[1]{}

\ignore{
\makeatletter
\newif\if@restonecol
\makeatother

}

\newcommand{\glob}[1]{\stackrel{#1}{\to}}
\newcommand{\impl}[2]{\mathrel{\substack{#2\\\longrightarrow\\(#1)}}}

\newcommand{\as}[1]{\colorbox{purple}{\color{white}\bfseries Andreas:}{\color{purple} #1}\xspace}
\newcommand{\lk}[1]{\colorbox{teal}{\color{white}\bfseries Lucas:}{\color{teal} #1}\xspace}

\newcommand{\cO}{{\cal O}} 
\newcommand{\VV}{{\cal V}} 

\newcommand{\range}[2]{\left[#1\;..\;#2\right]}

\newcommand{\lit}[1]{\left\llbracket #1 \right\rrbracket}

\newcommand{\solv}{\operatorname{solv}}

\newcommand{\dom}{\operatorname{dom}}

\newcommand{\vars}{\operatorname{vars}}

\newcommand{\half}{\Rightarrow}

\newcommand{\false}{\mathit{false}}

\theoremstyle{thmstyleone}
\newtheorem{corollary}{Corollary}
\newtheorem{theorem}{Theorem}

\theoremstyle{thmstyletwo}
\newtheorem{example}{Example}

\theoremstyle{thmstylethree}
\newtheorem{definition}{Definition}

\begin{document}

\title{Global Difference Constraint Propagation
for Constraint Programming$^\dagger$}

\renewcommand{\thefootnote}{\fnsymbol{footnote}}
\footnotetext[2]{An earlier, much shorter version of this paper was originally published in PPDP2008~\cite{feydy2008global}}
\renewcommand{\thefootnote}{\arabic{footnote}}
\setcounter{footnote}{0}
    
\author*[1]{\fnm{Lucas} \sur{Kletzander}\orcid{0000-0002-2100-7733}}
\email{lucas.kletzander@tuwien.ac.at}

\author*[2,3]{\fnm{Jip J.} \sur{Dekker}\orcid{0000-0002-0053-6724}}
\email{jip.dekker@monash.edu}

\author*[4]{\fnm{Andreas} \sur{Schutt}\orcid{0000-0001-5452-4086}}
\email{andreas.schutt@csiro.au}

\author*[2,3]{\fnm{Peter J.} \sur{Stuckey}\orcid{0000-0003-2186-0459}}
\email{peter.stuckey@monash.edu}

\affil[1]{\orgdiv{Databases and Artificial Intelligence Group}, \orgname{TU Wien}, \orgaddress{\street{Karlsplatz 13}, \city{Wien}, \postcode{1040}, \country{Austria}}}

\affil[2]{\orgdiv{Department of Data Science and Artificial Intelligence}, \orgname{Monash University}, \orgaddress{\city{Clayton}, \postcode{3168}, \state{Victoria}, \country{Australia}}}

\affil[3]{\orgname{ARC Training Centre in Optimisation Technologies, Integrated Methodologies, and Applications (OPTIMA)}, \orgaddress{\city{Melbourne}, \state{Victoria}, \country{Australia}}}

\affil[4]{\orgname{CSIRO Technology}, \orgaddress{\city{Clayton}, \postcode{3168}, \state{Victoria}, \country{Australia}}}

\abstract{
Difference constraints of the form $x - y \leq d$ are well studied,
with efficient algorithms for satisfaction and implication, because of their
connection to shortest paths.
Finite domain propagation algorithms, however, typically do not make use of these
algorithms, and treat each difference constraint as a separate
propagator.
Propagation does guarantee completeness of solving, but can be needlessly slow.
In this paper we describe how to build a (bounds consistent) global propagator for difference constraints that treats them all simultaneously.
SAT modulo theory solvers have included theory solvers for difference constraints for some time.
While a theory solver for difference constraints gives the basis of a global difference constraint propagator, we show how the requirements on the propagator are quite different.
Crucially, we show how to explain propagations by a global difference constraint propagator, in order to use it within a lazy clause generation solver.
We give experiments showing that treating difference constraints globally can substantially improve on the standard propagation approach.

\paragraph*{Acknowledgments}
This research was partially funded by the Australian Government through the Australian Research Council Industrial Transformation Training Centre in Optimisation Technologies, Integrated Methodologies, and Applications (OPTIMA), Project ID IC200100009.
}

\keywords{Difference Logic Constraints, Temporal Constraints, Different Logic Propagator, Constraint Programming, Finite Domain Propagation, Nogoods, Clause Learning}

\maketitle

\section{Introduction}

Finite domain propagation is a powerful approach to solving complex combinatorial satisfaction and optimization problems.
The core of a finite domain propagation solver are the propagators: given a constraint $c$ and a domain $D$ representing the set of possible values of the variables in $c$, they remove the values of the variables which cannot take part in any solution subject to $c$.
The propagation solver interleaves the fix-point computation of the propagators with search to find a solution. 
In this paper, we study how we should propagate constraints of the form $x - y \leq d$.

Difference (logic) constraints (also called temporal constraints), that is $x - y \leq d$, where $x$ and $y$ are variables and $d$ is an integer constant, are one of the simplest forms of constraints.
They are well-studied, with efficient algorithms known for their satisfaction and implication, because of their connection to shortest path algorithms.

But traditionally, finite domain propagation solvers do not make use of these algorithms.
Difference constraints are represented as individual propagators just like any other constraints.
Unlike most constraints, propagation on difference constraints is complete.
That is, if $C$ is a set of difference constraints, then a propagation solver starting from domain $D$ will fail if $C \wedge D$ is unsatisfiable.

But the order of the propagation of the constraints can make a significant difference.
In a sense, the propagation engine implements a version of Fords labelling and scanning algorithm (see~\cite{handbook}, page 558) that also checks for negative cycles.
The potential bad behaviour of propagation on constraints of difference is well recognized.

\begin{example}\label{ex:worst}
    Consider the constraints $x - y \leq 0$ and $y - x \leq -2$.
    Together, these are unsatisfiable.
    If the initial domains of $x$ and $y$ are given by
    $x \geq 0$, $x \leq n$, $y \geq 0$, and $y \leq n$, 
    then a propagation engine will take $\cO(n)$ steps to determine the unsatisfiability of the constraints, 
    because in each step it will only remove the two highest values of one variable propagating one of these constraints.
    This could be determined in constant time.
    \hfill $\qed$
\end{example}

Half-reified constraints~\cite{cp2011a} of difference  $b \half x - y \leq d$, that is if $b$ then the constraint $x - y \leq d$ holds, are very useful in expressing scheduling and other problems.
Finite domain propagation engines typically implement them as weak bounds propagators that only propagate to either propagate the constraint $x - y \leq d$ when $b$ is known true or set $\lnot b$ if the bounds of $x$ and $y$ force it to be false.
Finite domain propagation engines are not complete for checking implication of half-reified difference constraints by difference constraints.

\begin{example}\label{ex:imp}
    Consider the constraints $c_1 \equiv y - x \leq -2$ and $c_2 \equiv x - z \leq 3$ with the initial domain
    $D = \{ x \geq 0, x \leq 10, y \geq 0, y \leq 10, z \geq 0, z \leq 10\}$.
    After propagation we learn $x \geq 2$ and $y \leq 8$.
    The half-reified constraint $b \rightarrow z - y \leq -2$ does not force $\lnot b$ although this is certainly a consequence of $c_1 \wedge c_2$ since $c_1 \wedge c_2 \models y - z\leq 1$ by simply summing the constraints.
    \hfill $\qed$
\end{example}

SAT Modulo Theories (SMT) solvers~\cite{smt, smt2} treat difference constraints quite differently than finite domain propagators.
Effectively, they treat fully reified difference constraints  $b \Leftrightarrow x - y \leq d$, equivalent to $\left(b \half x - y \leq d \right) \wedge \left(\neg b \half y - x \leq -d-1\right)$, where $b$ acts as the ``name'' of the constraint in the SMT solver.
They use specialized shortest path algorithms~\cite{Cotton:Maler:06} to determine the unsatisfiability of the difference constraints, which are consequences of the
current Boolean evaluation.
For example, if $b$ is true then $x - y \leq d$ and if $b$ is false, then $y - x \leq -d- 1$.
They also determine all the difference constraints that are entailed or disentailed by the difference constraints which are consequences of the current Boolean evaluation.

In this paper, we investigate how we can build a global propagator for difference constraints for use in a finite domain propagation engine, that treats all difference constraints together, that can determine all consequences of half-reified constraints.
As we shall see, the questions that arise in a finite domain propagation context are different from those in SMT, and there are different trade-offs for the implementation.
To use this propagator in a lazy clause generation solver~\cite{lazyj}, we need to be able to explain its propagators, just like in an SMT solver. 

An earlier much shorter version of this paper was originally published in PPDP2008~\cite{feydy2008global}.
The contributions of this paper are:
\begin{itemize}
    \item improved algorithms for bounds propagation using global difference logic,
    \item the development of hybrid propagation approaches that combine the global propagator with individual difference constraint propagators,
    \item the first implementation of a global difference logic propagator including explanations in a general purpose finite domain solver,
    \item the first evaluation of different explanation options, and
    \item experiments illustrating that treating difference logic constraint globally can be substantially more efficient than treating them as individual constraints.
\end{itemize}

The remainder of the paper is organized as follows.
\Cref{sec:fd} introduces our notation for finite domain solving, and illustrate the usual propagators for difference constraints.
\Cref{sec:diff} introduces notation for graphs and shortest paths, and gives the fundamental theorem relating difference constraint satisfaction and implication to shortest paths.
We then explain the state-of-the-art incremental algorithms for difference constraints.
Then, \Cref{sec:prop} defines the capabilities and default implementation of a global difference constraint propagator, before discussing how to improve it by handling bounds specially, and how to explain the resulting propagations. \Cref{sec:huub} shows the integration of the global difference logic propagator in a modern lazy clause generation solver including preprocessing and simplification.
Finally, \Cref{sec:exp} gives experimental results, \Cref{sec:related} discusses related work, and \Cref{sec:conc} concludes the paper.


\section{Finite Domain Solvers}
\label{sec:fd}

We now introduce our terminology for finite domain propagation based solving.
Let $\VV$ be a set of (integer) variables.
For theory purposes we will treat Boolean variables as 0-1 integers, but for clarity of exposition use $\bot = 0$ and $\top = 1$ to represent values taken by Boolean variables. 
In an abuse of notation $b$ and $\neg b$ for Boolean variable $b$ are treated as shorthand for the expressions $b = \top$ and $b = \bot$, respectively.
A \emph{valuation}, $\theta$, is a mapping of variables to values, denoted $\{x_1 \mapsto d_1, \ldots, x_n \mapsto d_n\}$.
We define $\vars(\theta) = \{x_1, \ldots, x_n\}$.
We can apply a valuation to a variable $\theta(x_i)$ to return the value $d_i$, and extend application of valuations $\theta$ to arbitrary expressions involving $\vars(\theta)$ in the obvious way.

A \emph{primitive constraint}, $c$, is a set of valuations over a set of variables $\vars(c)$.
A valuation $\theta$ is a \emph{solution} of $c$ if  $\{ x \mapsto \theta(x) ~|~ x \in vars(c) \} \in c$.
A \emph{constraint} $C$ is a conjunction of primitive constraints, which we often treat
as a set.
A valuation $\theta$ is a solution of constraint $C$ if it is a solution for each $c \in C$.
We write $C_1 \models C_2$ if every solution of $C_1$ is a solution of $C_2$.
We extend this notation to valuations, writing $\theta \models C$ if $\bigwedge_{i=1}^n x_i = d_i \models C$, where $\theta = \{x_1 \mapsto d_1, \ldots, x_n \mapsto d_n\}$. 

An \emph{atomic constraint} is a unary constraint (we can restrict to the forms $x = d, x \neq d, x \geq d, x \leq d$), or $\false$.
A \emph{domain} $D$ is a conjunction of atomic constraints over $vars(D)$.
$D$ is a \emph{false domain} if it has no solutions.  
We use notation $D(x) = \{ \theta(x) ~|~ \theta \text{~is a solution of~} D \}$.
We use \emph{range notation} $\range{l}{u} = \{ d ~|~ l \leq d \leq u\}$.
A domain $D$ is a \emph{range domain} if for each variable $x \in vars(D)$ then $\exists l,u: D(x) = \range{l}{u}$.
We will only be interested in range domains for the purpose of this paper.
We can map a valuation $\theta$ to a domain $D_0 = \wedge_{x \in vars(D)} x = \theta(x)$.

A \emph{propagator} $p(c)$ for constraint $c$ is an inference algorithm, it maps a domain $D$ to a conjunction of literals $p(c)(D)$, where $D \wedge c \models p(c)(D)$. 
We assume each propagator is \emph{checking}.
That is, if $\forall x \in vars(c). |D(x)| = 1$, then $p(c)(D) \wedge D \models \false$ if and only if $\theta_D$ is not a solution of $c$.
A \emph{domain propagator} $dom(c)$, for constraint $c$, is the strongest possible propagator for $c$:
after propagation, the domain $D' = D \wedge dom(c)(D)$ only contains values in $d \in D'(x), \forall x \in \VV$, where there is a solution $\theta$ of $D \wedge c$ and
$\theta(x) = d$.
A propagation solver $prop(\mathcal{P},D)$ applied to a set of propagators $\mathcal{P}$ and a domain $D$ repeatedly applies the propagators $p \in \mathcal{P}$ adding new atomic constraints from propagators to $D$ until it reaches a domain $D'$ where $D' \models D' \wedge p(D')$ for all $p \in \mathcal{P}$, and returns $D'$.
In an abuse of notation we write $prop(C,D)$ to refer to
$prop(\{ dom(c) \mid c \in C\}, D)$.

\begin{example}
The domain propagator $f \equiv dom(x - y \leq d)$ for the difference constraint $x - y \leq d$ can be implemented as:
\begin{equation*}
    f(D)  =  \{ x \leq d + \max D(y), y \geq \min D(x) - d \}.
\end{equation*}
A domain propagator $g = \dom(b \half x - y \leq d)$ can be implemented as follows.
\begin{align*}
g(D) & = f(D), &&\text{if}  \ D(b) = \{ \top \} \\
g(D) & = \{ \neg b \}, &&\text{if} \  \min D(x) - \max D(y) > d\\
g(D) & = \emptyset, &&\text{otherwise}
\end{align*}\hfill$\qed$
\end{example}

A \emph{constraint satisfaction problem (CSP)} is a constraint $C$, often broken into a domain constraint and the remainder $C \leftrightarrow D \wedge C'$. 

In \emph{lazy clause generation (LCG)} solvers~\cite{lazyj} propagators are also required to give explanations for each new consequence $l \in p(c)(D)$ (i.e. where $D \not\models l$).
That is, an explanation clause  $e \equiv l_1 \wedge \dots \wedge l_n \half l$, such that $\forall 1 \leq i \leq n, D \models l_i$ and $c \models e$. 
LCG solvers, like SAT solvers, create an implication graph, where every new consequence is attached to a reason.
On failure, this is used to create a \emph{nogood} by repeatedly replacing literals in the explanation of  failure until only one literal that became true after the last decision remains.
This nogood is guaranteed to generate new propagation information.
See~\cite{lazyj} for more details.

\begin{example}
Given domain 
$D = \{ x \geq 5, x \leq 10, y \geq 0, y \leq 10\}$ and
constraint $c \equiv x - y \leq 4$, then 
$dom(c)(D) = \{ x \leq 14, y \geq 1 \}$.
The first literal is redundant, the second literal needs to be explained. The explanation is simply
$x \geq 5 \rightarrow y \geq 1$.

Given the same domain and constraint $c' \equiv b \half x - y \leq -6$, then 
$dom(c')(D) = \{ b = 0\}$
and the explanation is
$x \geq 5 \wedge y \leq 10 \rightarrow b = 0$.
\hfill$\qed$
\end{example}

\section{Solving Difference Constraints}\label{sec:diff}

Difference constraints are highly connected to shortest paths, a statement we shall formalize shortly.
In this section we give our graph notation, and then explain how the difference constraints algorithms work.

\subsection{Graphs, Paths and Potential Functions}

A \emph{weighted directed graph} $G = (V,E)$ is made up of vertices $V$ and a set $E$ of weighted directed edges $(u, v, d)$ from vertex $u \in V$ to vertex $v \in V$ with weight $d$.
We also use the notation $u  \stackrel{d}{\to} v$ to denote the edge $(u,v,d)$.
A \emph{path} $P$ from $v_0$ to $v_k$ in graph $G$, denoted $v_0 \rightsquigarrow v_k$, is a sequence of edges $e_1, \ldots, e_k$ where $e_i = (v_{i-1}, v_i, d_i) \in E$.
A \emph{simple path} $P$ is a path where $v_i \neq v_j, 0 \leq i < j \leq k$.
A (simple) \emph{cycle} $P$ is a path $P$ where $v_0 = v_k$ and $v_i \neq v_j, 0 \leq i < j \leq k \wedge (i \neq 0 \vee j \neq k)$.
The \emph{path weight} of a path $P$, denoted $w(P)$ is $\Sigma_{i=1}^k d_i$.

Let $G$ be a graph without negative weight cycles, that is, without a cycle $P$ where $w(P) < 0$.
Then, we can define a \emph{shortest path} from $v_0$ to $v_k$, which we denote by $SP(v_0,v_k)$, as the (simple) path $P$ from $v_0$ to $v_k$ such that $w(P)$ is minimized.
Let $wSP(x,y) = w(SP(x,y))$ or $+\infty$ if no path exists from $x$ to $y$.
Given a graph $G$ and vertex $x$ define the functions $\delta_x^\leftarrow, \delta_x^\rightarrow: V\rightarrow \mathbb{R}$ as
\[
\delta_x^\leftarrow (y) = wSP(x,y)\qquad
 \text{and}\qquad
\delta_x^\rightarrow (y) = wSP(y,x)\enspace.
\]

Let $G$ be a graph without negative weight cycles.
Then $\pi$ is a \emph{valid potential function} for $G$ if $\pi(u) + d - \pi(v)\ge 0$ for every edge $(u,v,d)$ in $G$.

There are many algorithms (see e.g.~\cite{ncd}) for detecting negative weight cycles in a weighted directed graph.
They either detect a cycle or determine a valid potential function for the graph.
The standard approach is the Bellman-Ford algorithm, which is $\cO(|V||E|)$

Given a valid potential function $\pi$ for a graph $G=(V,E)$, we can define
the \emph{reduced cost graph} $rc(G)$ as $(V, \{ (x,y,\pi(x) + d -
\pi(y) ~|~ (x,y,d) \in E \})$.
All weights in the reduced cost graph are non-negative, and we can recover the original path length $w(P)$ for path $P$ from $x$ to $y$ from paths in the reduced cost graph.
This follows from $w(P) = w + \pi(y) - \pi(x)$, where $w$ is the weight of the corresponding path in the reduced cost graph.
Since edges in the reduced cost graph are non-negative, we can use Dijkstra's algorithm to calculate shortest paths in the reduced cost graph in time $\cO(|V|\log |V| + |E|)$ instead of $\cO(|V||E|)$ assuming the use of a Fibonacci heap.

\subsection{Difference Constraints}

Difference constraints are a well-studied class of constraints (e.g.~\cite{shostak,dechter1991temporal}).
Difference constraints are of the form $x - y \leq d$.
Note that we can encode bounds constraints of the form $x \geq l$ and $x \leq u$ by selecting a \emph{dummy} variable $v_0$ to represent the fixed value 0,
and encoding them as $v_0 - x \leq -l$ and $x - v_0 \leq u$ respectively.
We call these \emph{encoded bounds constraints}.
We can map difference constraints to a weighted directed graph. 
For the remainder of this paper, let $v_0$ denote the dummy variable.

\begin{definition}
    \label{def:constraintgraph}
   Let $C$ be a set of difference constraints and let $G_C=(V,E)$ be the graph comprised of one weighted edge $x \stackrel{d}{\to} y$ for every constraint $x-y\leq d$ in $C$.
   We call $G_C$ the \emph{constraint graph} of $C$.
\hfill $\qed$
\end{definition}

The following well-known result characterizes how the constraint graph can be used for satisfiability and implication checking of difference constraints.

\begin{theorem}\label{thm:diff}
    Let $C$ be a set of difference constraints and $G_C$ its corresponding graph.
    $C$ is satisfiable if and only if $G_C$ has no negative weight cycles, and if $C$ is satisfiable then $C \models x - y \leq d$ if and only if $wSP(x,y) \leq d$.\hfill $\qed$
\end{theorem}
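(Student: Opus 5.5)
The plan is to argue both parts through the edge semantics of $G_C$: an edge $x \stackrel{d}{\to} y$ stands for $x - y \leq d$. Summing the constraints along a path $x = v_0 \to v_1 \to \dots \to v_k = y$ telescopes to $x - y \leq w(P)$. So every path from $x$ to $y$ witnesses a derived constraint $C \models x - y \leq w(P)$. For a cycle this becomes $0 \leq w(P)$, which gives the ``only if'' direction of satisfiability at once: a negative cycle yields $0 \leq w(P) < 0$, so $C$ has no solution.

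For the converse of satisfiability, I will build a solution from shortest paths. I add a fresh source $s$ with edges $s \stackrel{0}{\to} x$ for every vertex $x$. This creates no new cycles through the old vertices, so the graph still has no negative cycle and $wSP(s,\cdot)$ is finite. I then set $\theta(x) = -wSP(s,x)$. For an edge $x \stackrel{d}{\to} y$ the triangle inequality gives $wSP(s,y) \leq wSP(s,x) + d$, which rearranges to $\theta(x) - \theta(y) \leq d$. This $\theta$ is integral because all weights are integers, and it is exactly a valid potential function in the paper's sense, up to sign. If $v_0$ must equal $0$, I shift all values by $-\theta(v_0)$, which preserves every difference.

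For implication, the ``if'' direction is the telescoping argument applied to $SP(x,y)$: $C \models x - y \leq wSP(x,y) \leq d$. For ``only if'', suppose $wSP(x,y) > d$, including the case where no path exists. I need a solution with $\theta(x) - \theta(y) > d$. The natural choice is $\theta(z) = wSP(z,y)$ for vertices $z$ that reach $y$, which satisfies the edges among them and gives $\theta(x) - \theta(y) = wSP(x,y) > d$.

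The main obstacle is handling vertices that cannot reach $y$, and the case $wSP(x,y) = +\infty$. To avoid this, I will instead add the constraint $y - x \leq -d - 1$, that is, the edge $y \stackrel{-d-1}{\to} x$. Since $C$ is satisfiable, the graph had no negative cycle before. Any new negative cycle must use this edge, and its weight is at least $-d - 1 + wSP(x,y) \geq 0$ by integrality. So no negative cycle appears, and by the first part $C \wedge x - y > d$ is satisfiable, hence $C \not\models x - y \leq d$. Integrality of $d$ and of the edge weights is essential here. Simplicity of shortest paths, and hence their existence, follows from the absence of negative cycles.
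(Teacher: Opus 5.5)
The paper gives no proof of this theorem to compare against: it is quoted as a well-known result, with the $\qed$ placed directly after the statement. Your proof is correct and is the standard argument:
\begin{itemize}
\item telescoping along a path shows every path weight is a derived bound, which immediately rules out negative cycles when $C$ is satisfiable;
\item a virtual source with weight-$0$ edges to every vertex, together with $\theta(x) = -wSP(s,x)$, gives an integral solution when there is no negative cycle;
\item non-implication reduces to satisfiability of $C \cup \{y - x \leq -d-1\}$, where every cycle through the new edge has weight at least $-d-1+wSP(x,y) \geq 0$.
\end{itemize}
This matches how the paper later uses the result. Your $-\theta$ is exactly the valid potential function $\pi$ that the \textsc{Initialization} step obtains by running Bellman--Ford from an imaginary vertex joined by weight-$0$ edges to all nodes. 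The sign flip ($\pi(u) + d - \pi(v) \geq 0$ becomes $\theta(u) - \theta(v) \leq d$) is why any valid potential function certifies satisfiability.

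One small correction to your closing remark: integrality of the edge weights is what makes the solution in the first part integer-valued, but integrality of $d$ is not essential in the implication step. When $wSP(x,y)$ is finite, you can add the edge $y \stackrel{-wSP(x,y)}{\longrightarrow} x$ instead. This keeps every cycle non-negative and still yields a solution with $x - y \geq wSP(x,y) > d$.
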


Ramalingam~\emph{et al.}~\cite{ram} define efficient algorithms  for satisfiability of difference constraints after incrementally adding or deleting a constraint. 
Cotton and Maler~\cite{Cotton:Maler:06} define efficient incremental algorithms for difference constraints.
The satisfiability algorithm for incremental addition is identical to that of~\cite{ram}, while they also give an algorithm for checking implication of difference constraints on addition.
For our purposes, we are only interested in incremental addition algorithms so we will use the formulation of Cotton and Maler.

The incremental satisfaction algorithm for addition of~\cite{Cotton:Maler:06,ram}, shown in \Cref{alg:incsat}, relies on maintaining a potential function $\pi$ for the constraint graph $G_C$. 
In a sense, it is an incremental Bellman-Ford algorithm.
When a new constraint $u - v \leq d$ is added, the edge $u \stackrel{d}{\to} v$ is added to $G_C$ and a new potential function $\pi'$ is calculated or unsatisfiability
is detected.
The algorithm is $\cO(n \log n + m)$ for $m$ difference constraints on $n$ variables (using a Fibonacci heap to implement argmin).

\begin{algorithm2e}[tbp]
   \caption{\textsc{IncSat}~\cite{Cotton:Maler:06}}
   \label{alg:incsat}
   \SetKw{KwAll}{all}
   \SetKw{KwAnd}{and}
   \KwIn{$G_C = (V,E)$ a graph, $\pi$ a valid potential function for $G_C$,
   edge $(u,v,d)$ a new constraint to add to $G_C$.}
   \KwOut{UNSAT if $C \cup \{u - v \leq d\}$ is unsatisfiable, or
   $G_{C \cup \{u - v \leq d\}}$ and
   a valid potential function $\pi'$ for $G_{C \cup \{u - v \leq d\}}$.}
   $\gamma(v)$ := $\pi(u) + d - \pi(v)$\;
   $\gamma(w)$ := 0 for all $w \neq v$\;
   $\pi'(v)$ := $\pi(v)$ for all $v \in {\cal V}$\;
   \While{ $min(\gamma) < 0 \wedge \gamma(u) = 0$}{
     $s$ := argmin($\gamma$)\;
     $\pi'(s)$ := $\pi(s) + \gamma(s)$\;
     $\gamma(s)$ := 0\;
     \For{\KwAll $s \stackrel{d'}{\rightarrow} t \in G$}{
       \If{$\pi'(t) = \pi(t)$}{$\gamma(t)$ := $\min \{ \gamma(t), \pi'(s) + d'
         - \pi'(t)\}$}
     }
   }
   \If{$\gamma(u) < 0$}{\Return{UNSAT}}
   \Return{$((V, E \cup \{(u,v,d)\}), \pi')$}
\end{algorithm2e}

The implication algorithm of \cite{Cotton:Maler:06}, shown in \Cref{alg:incimp}, simply checks, for each difference constraint $x - y \leq d$ of interest, whether the new edge has created a path from $x$ to $y$ of length $\leq d$.
It makes use of the potential function previously calculated to compute shortest paths on the reduced cost graph using Dijkstra, rather than using a more expensive algorithm that handles negative weight edges.
The algorithm is $\cO(n \log n + m + p)$ for $m$ difference constraints on $n$ variables and $p$ constraints to check for implication.

\begin{algorithm2e}[htbp]
    \caption{\textsc{IncImp}~\cite{Cotton:Maler:06}}
    \label{alg:incimp}
    \KwIn{
        $G=(V,E)$ a \emph{constraint graph} representing a set of difference
        constraints $C \cup \{u - v \leq  d\}$, $\pi$ a \emph{valid potential
        function} on $G$, a set of difference constraints $C'$ where 
        $C \not\models c', \forall c' \in C'$.
    }
    \KwOut{The set $C'' \subseteq C'$ of constraints not implied by
    $C \cup \{u - v \leq d\}$.}
    \SetKw{KwAll}{all}
    \SetKw{KwAnd}{and}
    \SetKw{KwOr}{or}
        $C''$ := $\emptyset$\;
        compute $\delta^\leftarrow_u$ and $\delta^\rightarrow_v$ by using $rc(G)$
        via $\pi$\;
        \DontPrintSemicolon
        \For{\KwAll $c'= (x - y \leq d') \in C'$}{
            \lIf{
                $\delta^\leftarrow_u(x) + d +\delta^\to_v(y) > d'$
            }{
                $C''$ := $C'' \cup \{c'\}$;
            }
        }
        \PrintSemicolon
   \Return{$C''$}
\end{algorithm2e}

\begin{example}\label{ex:cm}
Consider the system of constraints $C = \{x - y \leq - 2, y - z \leq 3\}$, where $D(x) = D(y) = D(z) = \range{0}{10}$.
The corresponding graph $G_C$ is shown in \Cref{fig:ex}a.
Given a valid potential function $\pi(v_0) = 0$, $\pi(x) = -3$, $\pi(y) = -8$, $\pi(z) = -7$, the reduced cost graph of $G_C$ is shown in \Cref{fig:ex}b.

Consider the addition of the constraint $y - x \leq 0$ using \textsc{IncSat}.
$\gamma(x) = -5$ and $\gamma$ for the remaining variables is 0.
$x$ is the minimal $\gamma$ value.
We set $\pi'(x) = -8$ and set $\gamma(x) = 0$.
We then adjust the $\gamma$ values: $\gamma(y) = -2$.
Now $\gamma(y) \neq 0$ and the loop terminates and returns \textit{UNSAT}.
The unsatisfiable loop has been found just examining the nodes $x$ and $y$ and their outgoing edges.

Now consider the implication test for the constraint $x - z \leq 4$ using \textsc{IncImp}.
Assuming we have just added the constraint $x - y \leq -2$ and obtained the potential function illustrated in \Cref{fig:ex}b, we can compute $\delta_x^\leftarrow$ from this reduced cost graph.
First, we compute the weights of shortest paths to $x$ $wSP(v_0,x) = 3$, $wSP(z,x) = 6$, $wSP(y,x) = 5$, and then calculate $\delta_x^\leftarrow$ using the potential values $\delta_x^\leftarrow(x) = 0$, $\delta_x^\leftarrow(v_0) = 0$, $\delta_x^\leftarrow(z) = 10$, and $\delta_x^\leftarrow(y) = 10$.
We similarly calculate $wSP(y,z) = 2$, $wSP(y,v_0) = 2$, $wSP(y,x) = 5$ in the reduced cost graph and hence $\delta_y^\rightarrow(y) = 0$, $\delta_y^\rightarrow(z) = 3$, $\delta_y^\rightarrow(v_0) = 10$, and $\delta_y^\rightarrow(x) = 10$.
Examining $\delta_x^\leftarrow(x) + -2 + \delta_y^\rightarrow(z) = 0 + -2 + 3$, we find this is less that 4 and hence the constraint is implied.
\hfill $\qed$

\begin{figure}[tbp]
    \centering
\begin{tabular}{cc}
(a) & $
\xymatrix{
v_0 \ar[r]_0 \ar@/_5mm/[rr]_0 \ar@/^5mm/[rrr]^0 &
x \ar[r]_{-2} \ar@/^5mm/[rrr]^{10} &
y \ar[r]_{3} \ar@/_5mm/[rr]_{10} &
z \ar[r]_{10} & v_0
}
$ 
\\
(b) &
$
\xymatrix{
\stackrel{0}{v_0} \ar[r]_3 \ar@/_5mm/[rr]_8 \ar@/^5mm/[rrr]^7 &
\stackrel{-3}{x} \ar[r]_{3} \ar@/^5mm/[rrr]^{7} &
\stackrel{-8}{y} \ar[r]_{2} \ar@/_5mm/[rr]_{2} &
\stackrel{-7}{z} \ar[r]_{3} & \stackrel{0}{v_0}
}
$
\end{tabular}
\caption{The corresponding constraint graph $G_C$ for a system of difference
  constraints $C = \{x - y \leq - 2, y - z \leq 3\}$ is shown in (a).
The node for dummy variable $v_0$ is shown twice for clarity of presentation.
The reduced cost graph is shown in (b) assuming the potential value shown
above the nodes. Note how all edge lengths are non-negative. 
}
 \label{fig:ex}
\end{figure}
\end{example}

\section{Difference Constraint Propagation}\label{sec:prop}

In this section, we explain how we can create a global propagator for all difference constraints and reified difference constraints.

\subsection{A Global Difference Constraint Propagator}\label{sec:base}

We can use \textsc{IncSat} and \textsc{IncImp} to create a global propagator as follows. The global propagator needs to support the following operations:
\begin{itemize}
\item Add a new difference constraints $x - y \leq d$
\item Add a new bound $x \geq l$ or $x \leq u$
\item Add a new half-reified difference constraint  $b
\half u - v \leq d'$
\item Mark a state and backtrack to a previous marked state
\end{itemize}

Adding a new difference constraint  $x - y \leq d$ causes the propagator to run \textsc{IncSat}, and fails if this determines unsatisfiability.
If it does not, then it runs \textsc{IncImp} to determine if any difference constraints $x - y \leq d' \in C'$ become entailed.
For each half-reified difference logic constraint $b \Rightarrow y - x \leq -d'-1$, if we find that $x - y \leq d'$ is entailed, then we set $b=0$.
This is the same as an SMT difference constraint solver modulo the use of half reification.

This is not enough, since we are also interested in any new bounds that result from the constraint addition.
A corollary of \Cref{thm:diff} gives the key insight.

\begin{corollary}
If $C$ is a satisfiable set of difference and (encoded) bounds constraint
then $C \models z \geq l$ if and only if $wSP(v_0,z) \leq -l$ and
$C \models z \leq u$ if and only if $wSP(z,v_0) \leq u$. \hfill $\qed$
\end{corollary}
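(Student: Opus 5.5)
The plan is to reduce both claims directly to \Cref{thm:diff}, by rewriting each bound as a difference constraint involving the dummy variable $v_0$ and reading off the appropriate shortest path.

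\textbf{Step 1: fix the semantics of $v_0$.} Since $v_0$ represents the fixed value $0$, I interpret $C \models z \geq l$ as: every solution $\theta$ of $C$ satisfies $\theta(z) - \theta(v_0) \geq l$. Difference constraints are invariant under shifting all variables by a constant. Hence any solution of $C$ can be normalised to one with $\theta(v_0)=0$, and the solution set with $v_0 = 0$ determines all solutions. So $C \models z \geq l$ (with $v_0 = 0$) is equivalent to $C \models v_0 - z \leq -l$ as a pure difference-constraint entailment. Likewise, $C \models z \leq u$ is equivalent to $C \models z - v_0 \leq u$.

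\textbf{Step 2: apply the theorem.} Since $C$ is satisfiable, the second part of \Cref{thm:diff} applies with $x = v_0$, $y = z$, $d = -l$. This gives $C \models v_0 - z \leq -l$ if and only if $wSP(v_0,z) \leq -l$. Symmetrically, taking $x = z$, $y = v_0$, $d = u$ gives $C \models z - v_0 \leq u$ if and only if $wSP(z,v_0) \leq u$. The edge direction is consistent with \Cref{def:constraintgraph}: the constraint $x - y \leq d$ becomes the edge $x \stackrel{d}{\to} y$, and the encoded lower bound $v_0 - z \leq -l$ is the edge $v_0 \stackrel{-l}{\to} z$.

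\textbf{Step 3: handle missing paths.} If no path exists, $wSP = +\infty$, so the right-hand side fails. This agrees with \Cref{thm:diff}: with no path, the difference is unconstrained from that side, so the bound is not entailed. Nothing extra is needed beyond the convention $wSP = +\infty$.

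\textbf{Main obstacle.} The only delicate point is Step 1: justifying that treating $v_0$ as a variable rather than the constant $0$ does not change entailment. The shift-invariance argument settles it. If $\theta$ is a solution of $C$ with $\theta(z) - \theta(v_0) < l$, then $\theta' = \theta - \theta(v_0)$ is also a solution of $C$, now with $\theta'(v_0) = 0$, and it violates $z \geq l$. The converse direction is immediate.
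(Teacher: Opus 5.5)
Your proof is correct and takes the same route as the paper, which gives no separate proof and treats the statement as an immediate instance of \Cref{thm:diff} applied to the encoded constraints $v_0 - z \leq -l$ and $z - v_0 \leq u$, exactly as in your Step~2. Your shift-invariance argument in Step~1 adds something the paper leaves implicit: it justifies treating $v_0$ as an ordinary variable in the graph rather than as the constant $0$.
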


To extract new bounds, the propagator needs to determine for each variable $z$ the weight of a shortest path from $v_0$ to $z$ via the new edge $(x,y,d)$, i.e. $\delta^\leftarrow_x(v_0) + d +\delta^\to_y(z)$. 
The negation of this is a lower bound on $z$.
It also determines the weight of a shortest path from $z$ to $v_0$ via the new edge
$(x,y,d)$, $\delta^\leftarrow_x(z) + d +\delta^\to_y(v_0)$, which is an
upper bound on $z$.
The propagator updates the bounds of variables which have changed regarding these weights.

\begin{example}
Consider adding the new constraint $x - y \leq -2$ to obtain the system of constraints $C$ of \Cref{ex:cm} illustrated in \Cref{fig:ex}a. 
To extract new bounds, we need to determine $\delta^\leftarrow_x(v_0)$ but this is just $ - \min D(x)$, we have already determined $\delta^\to_y$ for \textsc{IncImp}.
The possibly new bounds are $0 + -2 + 0$ for $y$ (or a lower bound of 2), $0 + -2 + 3$ for $z$ (or a lower bound of -1, so not new).
The upper bounds are calculated similarly as $0 + -2 + 10$ for $x$ or 8.
The resulting domain is $D(x) = \range{0}{8}$, $D(y) = \range{2}{10}$, and $D(z) = \range{0}{10}$.
\hfill $\qed$
\end{example}

Adding a new bound $x \leq u$ or $x \geq l$ is simply adding a new
difference constraint $x - v_0 \leq u$ or $v_0 - x \leq -l$.
This is then treated as described in the previous paragraphs.

Adding a new half-reified constraint $b \Rightarrow u - v \leq d'$ is a new feature not usually present in SMT solvers, since in this context all reified constraints are usually known from the beginning of solving.
We need to check if the constraint is disentailed with the current set of difference constraints.
To do so, we calculate $wSP(u,v)$ and $wSP(v,u)$ by using Dijkstra's algorithm on the reduced cost graph.
Note that half-reified bounds constraints $b \half u \leq d'$ or $b \half -v \leq d'$ can be handled by examining the domain $D$ directly.

For backtracking, we need to restore the state of the difference constraint propagator to that at an earlier time.
The only state of the solver is the set of difference constraints posted, and the set of posted half-reified difference constraints, as well as the valid potential function.
The potential function $\pi$ remains valid on backtracking since it simply ensures that $\pi(x) + d - \pi(y) \geq 0$ for all edges $(x,y,d) \in G$, and removing edges does not invalidate this.
Hence, the potential function need not be trailed, as noted by Wang~et~al.~\cite{Wang:Ivancic:Ganai:Gupta:05}.
Backtracking must simply remove the representation of the constraints added since the marked state.

Under the assumptions that the domain $D$ is a range domain and no Boolean variable appears twice in the set of difference constraints, bounds constraints, and (added) half-reified difference constraints $C$, the global propagator defined in this subsection is a domain propagator for (the conjunction) $C$.


\subsection{Handling Bounds Constraints Better}\label{sec:bnds}

While in the SMT context bounds are simply another form of difference constraint, for a propagation engine bounds updates are much more frequent than difference constraint additions.
Hence, it is worth treating them separately.
We will not use the dummy variable $v_0$ to encode bounds constraints but treat them directly.
The basis of this treatment is the following theorem.

\begin{theorem}
\label{thm:imp}
Let $C$ be a set of difference constraints and $D$ a range domain,
$c$ a difference constraint, and $D' = prop(C,D)$ the domain after propagation.
Then (a) $D'$ is a false domain if and only if $D \wedge C$ is unsatisfiable and (b) $C \wedge D \models c$ if and only if $C \models c$ or $D' \models c$.
This means we can check implications by checking implication by difference constraints alone, and implication by bounds alone.
\end{theorem}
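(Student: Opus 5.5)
The plan is to reduce everything to \Cref{thm:diff} by encoding the range domain $D$ as bounds constraints on the dummy variable $v_0$. For each $z$ with $D(z)=\range{l_z}{u_z}$ we add the edges $z \stackrel{u_z}{\to} v_0$ and $v_0 \stackrel{-l_z}{\to} z$ to $G_C$, obtaining a graph $G'$ that represents $C \wedge D$. Two elementary facts about the propagation fixpoint $D'=prop(C,D)$ will do most of the work, and I assume throughout that the bounds of $D$ are finite.

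\emph{Fact 1 (soundness).} Each propagator satisfies $D \wedge c \models p(c)(D)$, so $C \wedge D \models D'$.

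\emph{Fact 2 (path bounds).} Suppose $D'$ is not false, and let $x=x_0 \stackrel{d_1}{\to} x_1 \cdots \stackrel{d_k}{\to} x_k=z$ be a path in $G_C$. Then $\max D'(x) \le \sum_i d_i + \max D'(z)$, and symmetrically $\min D'(z) \ge \min D'(x) - \sum_i d_i$. This follows by induction along the path. At a fixpoint, every edge $x_{i-1} \stackrel{d_i}{\to} x_i$ satisfies $\max D'(x_{i-1}) \le d_i + \max D'(x_i)$, which is exactly the rule $f(D)$ of the propagator for $x_{i-1}-x_i \le d_i$; the lower-bound version uses the same rule.

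For (a), the direction ``$D'$ false $\Rightarrow$ $C \wedge D$ unsatisfiable'' is immediate from Fact 1. For the converse, assume $D'$ is not false and define $\theta(x)=\min D'(x)$. For each $x-y\le d$ in $C$, the fixpoint gives $\min D'(y) \ge \min D'(x)-d$, so $\theta$ satisfies every constraint in $C$. It also lies in $D' \subseteq D$. Hence $C \wedge D$ is satisfiable. This avoids any reasoning about negative cycles.

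For (b), the direction ``$\Leftarrow$'' follows from monotonicity of $\models$ together with Fact 1 (applied when $D' \models c$). For ``$\Rightarrow$'', suppose $C \wedge D \models x-y\le d$. If $C\wedge D$ is unsatisfiable, then $D'$ is false by (a), so $D' \models c$ vacuously. Otherwise \Cref{thm:diff} applied to $G'$ yields a simple path $P$ from $x$ to $y$ with $w(P)\le d$.
\begin{itemize}
\item If $P$ avoids $v_0$, then $P$ lies in $G_C$, and \Cref{thm:diff} for $C$ alone gives $C \models c$.
\item Otherwise $P$ passes through $v_0$ exactly once, since it is simple. It therefore decomposes as $x \rightsquigarrow z \stackrel{u_z}{\to} v_0 \stackrel{-l_w}{\to} w \rightsquigarrow y$, where the subpaths $x\rightsquigarrow z$ (weight $a$) and $w \rightsquigarrow y$ (weight $b$) lie in $G_C$. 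Fact 2 gives $\max D'(x) \le a+u_z$ and $\min D'(y) \ge l_w - b$. Hence $D' \models x - y \le a+u_z-l_w+b = w(P) \le d$.
\end{itemize}
The degenerate cases where $x$ or $y$ is itself $v_0$ are not needed, since $c$ is a genuine difference constraint. The edge cases $x=z$ or $w=y$ correspond to empty subpaths and are covered by Fact 2 with $k=0$.

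The main obstacle is Fact 2, specifically making precise that the fixpoint condition of $prop$ enforces the single-edge inequality for every constraint simultaneously. That is what lets the induction along a path go through, and it needs care about $D'$ being non-false so that $\min$ and $\max$ exist. I would first state that single-edge fixpoint property explicitly from the definition of $prop$ and of $f(D)$, then run the induction. Everything else is bookkeeping around \Cref{thm:diff}.
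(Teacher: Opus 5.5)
Your proof is correct. Part (b) follows the paper's argument: apply \Cref{thm:diff} to the graph with $v_0$, split a shortest path at $v_0$ if it visits it, and propagate bounds along the two halves. Your Fact 2 is the precise form of the paper's ``bounds propagation on the path sets the bound to at least \dots''. You also handle the case where $C\wedge D$ is unsatisfiable, via (a). The paper passes over that case, even though the implication clause of \Cref{thm:diff} presupposes satisfiability.

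Part (a), however, takes a genuinely different route. The paper argues through \Cref{thm:diff}. Unsatisfiability yields a negative cycle of weight $w$ through some $v$. The paper then exhibits a propagation schedule that goes around that cycle $k$ times, with $kw+u_v-l_v<0$, until the lower bound of $v$ exceeds its upper bound. Finally it appeals to order-independence of propagation to conclude that the fixpoint $D'$ is false. You instead show directly that a non-false fixpoint is satisfiable: $\theta(x)=\min D'(x)$ satisfies every $x-y\le d$ because of the fixpoint inequality $\min D'(y)\ge \min D'(x)-d$.

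Your version is shorter. It uses neither \Cref{thm:diff} nor confluence of propagation, since it works for whatever fixpoint is reached. It also avoids a case the paper's ``w.l.o.g.'' leaves implicit: a negative cycle passing through $v_0$, where the repeated-traversal argument does not literally apply. What the paper's version buys is an operational picture. It ties failure to a concrete negative cycle and shows that detecting it can take a number of propagation steps proportional to the domain width. That is exactly the behaviour of \Cref{ex:worst} that motivates the global propagator.
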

\begin{proof}
(a) ($\leftarrow$) By the correctness of propagation $C \wedge D \models prop(C,D) = D'$ and hence if $D'$ is a false domain $C \wedge D$ is unsatisfiable.
($\rightarrow$) Let $G$ be the graph encoding $C$ and $D$ (as encoded bounds
constraints using dummy variable $v_0$), then by \Cref{thm:diff} we have that the
corresponding graph has a negative weight cycle $p$.
W.l.o.g., let $p$ be from $v \rightsquigarrow v$ with weight $w<0$, $D(v) = \range{l_v}{u_v}$ be the range of $v$ and $k$ be chosen so that $k w + u_v - l_v < 0$.

Then consider the following path.
$$
v_0 \stackrel{-l_v}{\longrightarrow} \stackrel{k ~\text{times}}{\overbrace{v
\rightsquigarrow v
\cdots v
  \rightsquigarrow v}} \stackrel{u_v}{\longrightarrow} v_0
$$
Now standard bounds propagation, by choosing the constraints in the order of this path, eventually sets the lower bound of $v$ to $l_v - k w > u_v$ and creates a false domain.
Since any order of propagating constraints leads to the same result, $D'$ is a false domain.

(b)($\leftarrow$) Clearly $C \models c$ or $D' \models c$ imply that $C \wedge D \models c$ since $C \wedge D \models D'$.
($\rightarrow$) By \Cref{thm:diff}, we have that $c \equiv x - y \leq d$ is implied by the graph $G$ encoding $C$ and $D$ using encoded bounds constraints if and only if a shortest path from $x$ to $y$ is length less than or equal to $d$.
Suppose the shortest path does not visit $v_0$.
Then, clearly $C \models c$ since $G_C$ (without the encoded bound constraints) has the same shortest path.
Otherwise, $SP(x,y) = x \rightsquigarrow v_0 \rightsquigarrow y$, where $w(x \rightsquigarrow v_0) = w_1$, $w(v_0 \rightsquigarrow y) = w_2$, and $w_1 + w_2 \leq d$.
Bounds propagation on the path $v_0 \rightsquigarrow y$ sets the lower bound of $y$ to at least $-w_2$.
Bounds propagation on the path $x \rightsquigarrow v_0$ sets the upper bound of $x$ to at most $w_1$.
Hence, $D' \models x \leq w_1 \wedge y \geq -w_2$ and hence $D' \models x - y \leq d$.
\end{proof}

The above theorem implies we can check the satisfiability of bounds constraints using propagation on the difference constraints, and we can split the implication into two checks: just using the difference constraints and just using the computed bounds.

At first, let us consider bounds updating.
We define an algorithm that simultaneously considers all bounds changes since the last time the global propagator was run, as opposed to adding them one by one, which is required by the base approach.

\begin{algorithm2e}[htb]
   \caption{\textsc{IncLB}}
   \label{alg:BoundCheck}
   \KwIn{$G_C=(\VV,E)$ a \emph{constraint graph} representing set of difference
     constraints $C$, $\pi$ a \emph{valid potential function} on $G_C$,
     a range domain $D_o$ giving the upper and lower bounds of
     variables the last time the propagator was run, and a range
     domain $D$ giving the current bounds.}
   \KwOut{A set of lower bounds constraints $B$ giving new bounds for variables
   in $\VV$.}
   \SetKw{KwAll}{all}
   \SetKw{KwAnd}{and}
   \SetKw{KwOr}{or}
   $\VV_l := \{x\in\VV\mid \min D(x) > \min {D_o}(x)\}$\;
\ignore{
   \tcp{Start of the computation of the new lower bounds}
}
   $\pi(v_0) := \max\{\min D(x) + \pi(x) \mid x\in \VV_l \}$\;
   \% Dijktras algorithm from $v_0$ on the reduced cost graph $rc(G)$ 
   augmented so
   that variable $s$ is not considered if the new lower bound
   $-\delta^\to_{v_0}(s)$ is smaller than the existing bound $\min D(s)$\;
   $\gamma(v)$ := $\pi(v_0) - \min D(v) - \pi(v)$ for all $v \in \VV_l$\;
   $\gamma(v)$ := $+\infty$ for all $v \in \VV \setminus \VV_l$\;
   $wSP(v_0,v)$ := $+\infty$ for all $v \in \VV$\;
   \While{$\min(\gamma) < +\infty$}{
     $s$ := argmin($\gamma$) \;
     $wSP(v_0,s)$ := $\gamma(s)$\;
     $\gamma(s)$ := $+\infty$ \;
     $\delta^\to_{v_0}(s)$ := $wSP(v_0,s) + \pi(s) - \pi(v_0)$\; 
     \If{$- \delta^\to_{v_0}(s) > \min D_o(s)$}{
     \For{\KwAll $s \stackrel{d'}{\rightarrow} t \in G$}{
       \If{$wSP(v_0,t) = +\infty$}
        {\If{$\gamma(s) + \pi(s) + d' - \pi(t) < \gamma(t)$}
          {$\gamma(t)$ := $\gamma(s) + \pi(s) + d' - \pi(t)$\;
          }
     }
   }
   }
   }
   \% Convert new bounds into constraints\;
   \For{\KwAll $v\in \VV$}{
      \If{$-\delta^\rightarrow_{v_0}(v) > \min D(v)$}{
        $B$ := $B \cup \{ -v \leq \delta^\rightarrow_{v_0}\}$  
      }
   }
\ignore{
   \tcp{Start of the computation of the new upper bounds}
   let $\VV_u := \{x\in\VV\mid \max D(x) < \sup_{D_o}(x)\}$\;
   $\pi(v_0) := \min\{\max D(x) + \pi(x)\mid x\in \VV_u\}$\;
   compute $\delta^\leftarrow_{v_0}$ with starting queue $\VV_u$ and their
   values
   $\pi(x) + \max D(x) - \pi(v_0)$ ($x\in \VV_u$) by using the reversed graph of
   $rc(G)$ via $\pi$. Again don't add variable $v$ to the queue if
   $\delta^\leftarrow_{v_0}(v) > \max D(v)$ since there is no new bound\;
   \For{\KwAll $x\in \VV$}{
      \lIf{$\max D(x) > \delta^\leftarrow_{v_0}(x)$}{
         $B := B\cup\{x \leq \delta^\leftarrow_{v_0}(x)\}$
      }
   }
}
   \Return{B}
\end{algorithm2e}


Since the calculation of the new lower and upper bounds is symmetric, we
describe only the algorithm \textsc{IncLB}
(see \Cref{alg:BoundCheck}) for the lower bounds.

For the new lower bounds of a variable $x$, we want to know if the negative of the weight of the shortest path to $x$ from $v_0$, $- \delta^\to_{v_0}(x)$ is greater than the current lower bound of $x$, $\min D(x)$.
Because we do not represent a dummy variable $v_0$ in the constraint graph, we compute a valid potential function value $\pi(v_0)$ for $v_0$ regarding $\pi$ in the first step.
For this calculation, we consider only variables whose current lower bound is greater than their lower bound resulting from the last run of \textsc{IncLB}.

At the second step, we run Dijkstra on the reduced cost graph with a starting priority queue of the variables which have changed lower bound since the last run of the propagator.
The initial value for variable $x$ is the reduced cost of the ``imaginary'' edge between $v_0$ and $x$.
At last, we create the bounds constraints for calculated new bounds.

Our Dijkstra's algorithm does not explore all variables in the graph, it visits only variables $x$ for which $-\delta^\to_{v_0}(x) > \min D(x)$ holds, that is where a new lower bound has been found.

\begin{example}\label{ex:b}
Consider the set of constraints $C = \{$ $x - y \leq -2$, $y - z \leq 3$, $z - u \leq -1$, $u - v \leq 2$, $x - t \leq 1$, $t - z \leq -1$$\}$.
The constraint graph $G_C$ is shown in \Cref{fig:b}a. 
A valid potential function for $G_C$ is $\pi(x) = -3$, $\pi(y) = -8$, $\pi(z) = -7$, $\pi(u) = -9$, $\pi(v) = -7$, $\pi(t) = -4$.
In fact, since the graph does not include the dummy variable $v_0$, there are no cycles.
The domain $D(x) = \range{0}{18}$, $D(y) = \range{2}{20}$, $D(z) = \range{6}{19}$, $D(u) = \range{8}{20}$, $D(v) = \range{11}{20}$, $D(t) = \range{0}{18}$ is a fix-point for propagation on these constraints.

Suppose we update the lower bounds of variables $t$ to $1$ and $x$ to $5$. 
The algorithm then works as follows: $\VV_l = \{t,x\}$ and we compute $\pi(v_0) = 2$. 
Effectively, we will be searching for shortest paths from $v_0$ from the reduced cost graph shown in \Cref{fig:b}b.

Dijktra's algorithm determines $wSP(v_0,x) = 0$ or equivalently $\delta_{v_0}^\rightarrow(x) = 0 + -3 - 2 = -5$ and the lower bound of $x$ is $5$, a tighter bound, so the algorithm updates values for edges leaving $x$.
Then, $wSP(v_0,t) = 2$ or $\delta_{v_0}^\rightarrow(t) = 2 + -4 - 2 = -4$. 
This is again a new bound of $4$
and $t$'s neighbours are enqueued.
Then, $wSP(v_0,y) = 3$ or $\delta_{v_0}^\rightarrow(y) = 3 + -8 - 2 = -7$, which is again a new lower bound.
Then, $wSP(v_0,z) = 4$ and $\delta_{v_0}^\rightarrow(z) = 4 + -7 - 2 = -5$, and another new bound
$-z \leq -5$. 
When we calculate $wSP(v_0,u) = 5$ and  $\delta_{v_0}^\rightarrow(u) = 5 + -9 - 2 = -6$,  the new bound $6$ is not stronger than the existing bound, so no new propagation occurs.
The algorithm never visits node $v$.
We return the new bounds 
$B$ = $\{ t \geq 4$, $y \geq 7$, $z \geq 5\}$.

Compare this with the naive global difference approach of the previous section.
Each new bound is a new constraint  $v_0 - t \leq 1$ and $v_0 - x \leq -5$, and the graph shown in \Cref{fig:b}a would have 12 additional edges to and from the dummy node $v_0$. 
Adding the first constraint may determine a new potential function, and then shortest paths from $v_0$ are determined for each variable, and bounds updated.
Then the second constraint is added and a possibly new potential function computed and once more the shortest paths from $v_0$ are determined and bounds updated.
The improved method has a smaller graph, does not update potential function values, and visits each edge at most once regardless of the number of bounds changes since the last execution.
\hfill $\qed$

\begin{figure}[t]
\centering
\begin{tabular}{cc}
(a) & $
\xymatrix{
x \ar[r]_{-2}\ar[dr]_{1} &
y \ar[r]_{3}  &
z \ar[r]_{-1} &
u \ar[r]_2 & 
v \\
& t \ar[ur]_{-1}  
}
$ 
\\
(b) &
$
\xymatrix{
& \stackrel{-3}{x} \ar[r]_{3}\ar[dr]_{2} &
\stackrel{-8}{y} \ar[r]_{3}  &
\stackrel{-7}{z} \ar[r]_{1} &
\stackrel{-9}{u} \ar[r]_0 & 
\stackrel{-7}{v} \\
\stackrel{2}{v_0} \ar@{-->}[ur]_0\ar@{-->}[rr]_5 && \stackrel{-4}{t} \ar[ur]_{2}  
}
$ 
\end{tabular}
\caption{(a) The corresponding constraint graph $G_C$ for a system of difference
  constraints of \Cref{ex:b} and (b) the 
reduced cost graph with the dummy node $v_0$ and imaginary edges (dashed) added.
}
 \label{fig:b}
\end{figure}
\end{example}

Note that \textsc{IncLB} (and \textsc{IncUB}) requires $\cO(n \log n + m)$ time and $\cO(n + m)$ space for $m$ difference constraints (not bounds constraints) on $n$ variables.
Incremental propagation of difference constraints using a FIFO queue of propagators\footnote{A FIFO queue is the usual case, and for a LIFO queue it is easy to create even worse behavior.} can require $\cO(nm)$ time.

\begin{figure*}[t]
$$
\xymatrix{
y_0 \ar[r]\ar@/_5mm/[rr]_{1}\ar@/^6mm/[rrr]_{2}
\ar@/^9mm/[rrrrr]_{n-2}
\ar@/^12mm/[rrrrrr]^{n-1} & y_1 \ar[r] & y_2 \ar[r] & y_3 \ar[r] & \ldots \ar[r] &
y_{n-1}
\ar[r] & y_n \ar[d] \\
x_n & x_{n-1}\ar[l]  & x_{n-2}\ar[l]\ar@/^3mm/[ll]  &
x_{n-3}\ar[l]\ar@/^3mm/[ll] \ar@/^6mm/[lll]
  &
\ldots\ar[l]  & x_1\ar[l] \ar@/^3mm/[ll] \ar@/^6mm/[lll]
\ar@/^9mm/[llll]\ar@/^12mm/[lllll]  & x_0 \ar[l] \ar@/^3mm/[lll]
\ar@/^6mm/[llll]
\ar@/^9mm/[lllll]\ar@/^12mm/[llllll]
\\
\\
}
$$
\caption{The corresponding constraint graph for a system of difference constraints of \Cref{ex:n3}. Edges with 0 weight are unlabelled.}
 \label{fig:g}
\end{figure*}

\begin{example}\label{ex:n3}
Consider the system of difference constraints $C$ defined as
$y_{i-1} - y_i \leq 0, 2 \leq i \leq n$,
$y_0 - y_i \leq i - 1, 1 \leq i \leq n$ and
$y_n - x_0 \leq 0$,
$x_i - x_j \leq 0, 0 \leq i < j \leq n$
illustrated in \Cref{fig:g}.
Domain $D$ is a fixpoint for $C$ when 
$D(y_i) = \range{0}{kn}, 0 \leq i \leq n$ and
$D(x_i) = \range{0}{kn}, 0 \leq i \leq n$ for a $k\geq1$.
None of the difference constraints is implied by the domain, which would mean
it could be removed from the propagation engine.

Consider when the domain of $y_0$ becomes $\range{n}{kn}$.
All constraints involving $y_0$ are queued for propagation.
If we are unlucky we will first bounds propagate on
$y_0 - y_n \leq n-1$, which will change $D(y_n)$ to $\range{1}{kn}$
and queue the $y_n - x_0 \leq 0$. We then propagate on
$y_0 - y_{n-1} \leq n-2$ which changes $D(y_{n-1})$ to $\range{2}{kn}$
and queues $y_{n-1} - y_n \leq 0$. Continuing we modify each domain
$D(y_i), 1 \leq i \leq n$
to $\range{n - i + 1}{kn}$ and queue each $y_{i-1} - y_i \leq 0, 2 \leq
i \leq n$ in reverse order.
The next propagator considered is $y_n - x_0 \leq 0$ which changes
$D(x_0)$ to $\range{1}{kn}$ and queues the constraints on $x_0$.
Then we consider $y_{n-1} - y_n \leq 0$ and modify $D(y_n) = \range{2}{kn}$
and queue $y_n - x_0 \leq 0$ again.
Continuing we modify each domain
$D(y_i), 2 \leq i \leq n$
to $\range{n - i + 2}{kn}$ and queue each $y_{i-1} - y_i \leq 0, 3 \leq
i \leq n$ in reverse order.
We then propagate the constraints on $x_0$ setting all $D(x_i)$ to
$\range{1}{kn}$ and queuing all constraints on $x_i, 1 \leq i \leq n$.
We then propagate on $y_{n} - x_n \leq 0$ again
and modify $D(x_0) = \range{2}{kn}$. We queue the constraints
on $x_0$ once more.  To reach a fixpoint the domain of $x_0$ changes
$n$ times, and we queue each constraint on $x$ $n$ times, hence the process
is $\cO(n^3)$.

Compare this with the execution of \textsc{IncLB}. Since all the arcs are
positive we can assume $\pi(v) = 0, \forall v \in V$ and the reduced cost
graph is identical to the original graph.
The algorithm will process each edge exactly once, visiting the nodes in
order $y_0$, $y_1$, \ldots, $y_n$, $x_0$ followed by the remainder of
the $x$ nodes in any order. The process is $\cO(n^2)$.

Finally note the whole process of increasing the lower bound of $y_0$ by
$n$ can be carried out $k-1$ times down a branch of the search tree!
\hfill $\qed$
\end{example}

\ignore{
\begin{theorem}
   Let $D$, $D_o$ range domains where $D \sqsubset \solv(C, D_o)$,
   $C$ a satisfiable set of difference constraints with respect to $D_o$, and
   $\pi$ a valid potential function on $G_C$.
   Then \textsc{BoundCheck} computes $D' = \solv(C,D)$.
\end{theorem}
\begin{proof}
   W.l.o.g. we prove the theorem only for the lower bounds. The proof is
   symmetric for the upper bounds.

   Suppose $C$ is satisfiable with respect to $D$.
   Then let $G_{v_0}$ be the constraint graph representing $C$ and the bounds
   constraint
   $-x\leq -\inf_{D}(x)$, $x\leq \sup_{D}(x)$ for each variable $x\in\VV$,
   $v\in\VV$ a variable where $\inf_{D}(v) < \inf_{D'}(v)$, and
   $p = v_0 \rightsquigarrow v$ a
   simple shortest path.
   Because of \Cref{thm:diff} it holds $w(p) = \inf_{D'}(v)$.

   For the first visited variable $v_1$ in the path holds
   $\inf_{\solv(C,D_o)}(v_1) < \inf_{D}(v_1)$, otherwise $C\wedge D_o\models
   -v\leq \inf_{D'}(v)$ and therefore $\inf_{D}(v) = \inf_{D'}(v)$. That means
   that $v_1\rightsquigarrow v\in G_C$ and we can only consider shortest path
   which starts in $v_0$, goes via a edge to such a variable $v_1$ and continues
   in $G_C$.

   Before \textsc{BoundCheck} runs Dijkstra for the lower bounds (line~3), it
   determines the reduced cost of edges between $v_0$ and variables $y$ like
   $v_1$ by computing a valid potential function value for $v_0$ (line~2), and
   puts these variables $y$ into the queue with the reduced cost as initial
   value. Obviously, \textsc{BoundCheck} calculates a shortest path
   $v_0\rightsquigarrow v$ which increases the lower bound $\inf_{D}(v)$ to
   $\inf_{D'}(v)$.

   Furthermore, if $C$ is unsatisfiable with respect to $D$, then a negative
   weight cycles exists (\Cref{thm:diff}) in $G_{v_0}$ which we can split
   in two path one beginning with $v_0$ and one ending in $v_0$. With the same
   argumentation as above we can show that at least one path uses a edge between
   $v_0$ and a variable $v_1$ or the other way around where
   $\inf_{\solv(C,D_o)}(v_1) < \inf_{D}(v_1)$ or
   $\sup_{D}(v_1) < \sup_{\solv(C,D_o)}(v_1)$, respectively. Therefore \textsc{
   BoundCheck} generates bound constraints for a variable $v$ on the cycle
   leading to a false domain.
\end{proof}
}

We can thus check satisfiability of addition of bounds constraints
by running \textsc{\mbox{IncLB}} on the new lower bounds
and \textsc{IncUB} on the new upper bounds and seeing if in the result
any variable has an empty domain.
Afterwards, we can check new implications
of difference constraints caused by the addition of bounds constraints
by simply checking for each $(x - y \leq d) \in C'$ if it is implied
by bounds, i.e. $\max D(x) - \min D(y) \leq d$.

We check satisfiability on addition of a new difference constraints
$u - v \leq d$ by running \textsc{IncSat} on the graph $G_C$ containing
only the difference constraints (not bounds constraints), and then
perform bounds propagation by determining the possibly new lower bound
for $v$ given by $-d + \min D(u)$ and doing bounds propagation with
\textsc{IncLB} and the possibly new upper bound for $u$ of $d + \max D(v)$
and doing bounds propagation with \textsc{IncUB}.
We can then check implication by first checking if the bounds imply
some difference constraint in $C'$, and then running  \textsc{IncImp}
on the graph $G_{C\cup \{u -v \leq d\}}$.

Adding a new half-reified difference constraint $b \half u - v \leq d'$
requires us to check if it is disentailed by the domain
$D$, and if not we calculate $wSP(u,v)$ and $wSP(v,u)$ using the
reduced cost graph of $G_C$.


\subsection{Explanations}

In an LCG solver, each propagation needs to be justified by an explanation
that gives the reason for the propagation in clausal form.
Difference logic relies on three different kinds of propagations.

For the simplicity of explaining, we extend the notion of an edge~$s  \stackrel{d}{\to} t$
in the constraint graph $G_C$ with a Boolean variable~$b$ and depict it with $s\impl bd t$.
In the case the edge originates from a different constraint $s - t \leq d$ then $b = \top$.
In the other case, $b$ is the same Boolean variable as from the corresponding half-reified
constraint represented by the edge.
In addition, we denote $\mathit{pred}(t)=s$ the predecessor of~$t$ on the shortest path to~$t$ and $\mathit{pred}^\rightarrow(t)=b$
the Boolean variable of the edge $\mathit{pred}(t)\impl bd t$,
both can be determined in line 16 of \Cref{alg:BoundCheck}, when a new lower bound of $t$ is detected.

\subsubsection{Explaining Bound Propagations}

The simplest forms of propagation are bound propagations in \textsc{IncLB} and \textsc{IncUB}.
While the algorithms explore bound propagations jointly for the whole constraint graph~$G_C = (V, E)$,
each individual bound can be explained locally by the arc that caused the bound change.
Note this explanation is more reusable than more complex explanations.
For lower bound changes in \textsc{IncLB}, if $\gamma(t)$ is updated in line 16 of \Cref{alg:BoundCheck} along edge $s\impl bd t$, the explanation for the lower bound update of $t$ is that the edge is active together with the bound of the predecessor:
\begin{equation}
    \mathit{pred}(t)\geq-\delta_{v_0}^\rightarrow(\mathit{pred}(t))\land \mathit{pred}(t)^\rightarrow \half t\geq-\delta_{v_0}^\rightarrow(t))
\end{equation}

The upper bound updates are explain analogously following edges in opposite direction.

\subsubsection{Explaining Cycles of Negative Length}

The other two types of propagations both propagate Boolean variables used in implications to $\bot$. The simpler case is that a cycle of negative length is found at the addition of a new edge in \textsc{IncSat}. This either triggers a conflict if it is the consequence of adding a new edge $u\impl bd v$ with $b\mapsto\top$ or it is applied to half-reified edges after execution of \textsc{IncImp} to propagate and explain setting $b\mapsto\bot$. Similar to the previous case, the predecessor $\mathit{pred}(t)=s$ and the Boolean implying the preceding edge $\mathit{pred}^\rightarrow(t)=b$ is stored in line 10 of \Cref{alg:incsat}. $\gamma(u)<0$ in line 11 entails that there is a path of negative length from $u$ back to $u$, so the chain of predecessors in this loop is unrolled for the explanation:
\begin{equation}
    \mathit{pred}^\rightarrow(u)\land\mathit{pred}^\rightarrow(\mathit{pred}(u))\land\ldots \half \neg b
\end{equation}

The edge under consideration from $u$ to $v$ closing the cycle will never be added to the predecessor relation, and no other cycles of negative length exist when applying \textsc{IncSat}, so the explanation chain is guaranteed to end at $v$ which forms the root of the predecessor tree. Interestingly, Cotton and Maler~\cite{Cotton:Maler:06} who define \Cref{alg:incsat} do not really describe how to explain negative cycles, though they use this in their implementation. 

\subsubsection{Explaining Boolean Propagations Based on Bounds}

Finally, there is a third type of propagation that sets Boolean variables $b$ of disentailed constraints $b \half x - y \leq d'$ where the constraint is disentailed 
by the bounds on $x$ and $y$, so $\min D(x) - \max D(y) > d'$. 
The direct explanation (\texttt{Simple}) for this propagation is the following:
\begin{equation}
\label{eq:expl_simple}
    x \geq\min D(x)\land y\leq\max D(y) \half \neg b
\end{equation}

However, this explanation is not necessarily as strong as it could be. The propagation occurs as soon as $\min D(x)-\max D(y)>d'$, while the explanation does not consider how much above $d'$ the difference actually is. 
Therefore, the explanation can be lifted (\texttt{Lifted}):
\begin{equation}
    \mathit{expl}(\neg b)=\begin{cases}
        x\geq\max D(y)+d'+1\land y\leq\max D(y) \half \neg b & \text{if }\min D(x) \text{ was updated}\\
        x\geq\min D(x)\land y\leq\min D(y)-d'-1 \half \neg b & \text{otherwise}
    \end{cases}
\end{equation}
Now the most recently moved bound (or the lower bound if both were updated) is included with the least restrictive value that causes propagation of the Boolean variable.

Finally, a third option under consideration is to produce the lifted explanation lazily (\texttt{Lazy}), i.e., only when actually needed, and based on literals already in use by the solver (so lifting might occur on both bounds ensuring that the combination of both bounds is still a valid explanation). 

\ignore{
\lk{Think about order of content. Definition of E and E' is currently below in Sec. 5!}

Finally, there is a third type of propagation that sets Boolean variables $b$ of implied edges 
$u\impl bd v$ in $E'$ to $\bot (0)$ if the lower bound of $u$ was increased by \textsc{IncLB} or the upper bound of $v$ was decreased by \textsc{IncUB} such that $\min D(u)-\max D(v)>d$. In that case the difference constraint cannot be fulfilled, therefore, it must never be added to $E$. The direct explanation (\texttt{Simple}) for this propagation is the following:
\begin{equation}
\label{eq:expl_simple}
    u\geq\min D(u)\land v\leq\max D(v) \half b = 0
\end{equation}

However, this explanation is not necessarily as strong as it could be. The propagation occurs as soon as $\min D(u)-\max D(v)>d$, while the explanation does not consider how much above $d$ the difference actually is, e.g., for $u\impl b5 v$ with $\min D(u)=10$ and $\max D(v)=0$, Equation~\eqref{eq:expl_simple} entails $\mathit{expl}(\neg b)=u\geq10\land v\leq0$, but $u\geq6\land v\leq0$ would also explain the propagation. Therefore, the explanation can be lifted (\texttt{Lifted}):
\begin{equation}
    \mathit{expl}(\neg b)=\begin{cases}
        u\geq\max D(v)+d+1\land v\leq\max D(v) \half \neg b & \text{if }\min D(u) \text{ was updated}\\
        u\geq\min D(u)\land v\leq\min D(u)-d-1 \half neg b & \text{else}
    \end{cases}
\end{equation}

Now the most recently moved bound (or the lower bound if both were updated) is included with the least restrictive value that causes propagation of the Boolean.

Finally, a third option under consideration is to produce the lifted explanation lazily (\texttt{Lazy}), i.e., only when actually needed, and based on literals already in use by the solver (so lifting might occur on both bounds ensuring that the combination of both bounds is still a valid explanation). \lk{Lazy might be too vague}
}

\subsection{Possible Variations and Optimizations}
\label{sec:optimizations}

\Cref{thm:imp} also provides us with other ways of building
difference constraint propagators by mixing the standard approach
of a propagator per difference constraint and reified difference
constraint with the global approach.

Standard propagators will perform bounds propagation and implication by
bounds whereas the global propagator
performs a consistency check of the
system of difference constraints.
Hence we can combine the global propagator which only considers
addition of difference constraints $x - y \leq d$ and
half-reified difference constraints $b \half u - v \leq d'$
with the usual propagators for these constraints.
By prioritizing the global propagator before the standard propagators
we can avoid the worst case behavior of \Cref{ex:worst}
determine implication purely by difference constraints, and use the
standard incremental queueing for bounds propagators/implication by bounds.

We can further improve upon the algorithms above by taking into account
fixedness of variables, and reasoning better on which difference constraints
in $C'$ can be implied by a constraint addition.

Any fixed variable $x$ where $D(x) = \{d\}$ acts similarly to
the dummy variable $v_0$ since any path $x \rightsquigarrow y$
implies a path $v_0 \stackrel{-d}{\rightarrow} x \rightsquigarrow
y$,
and similarly $y \rightsquigarrow x$ implies the
path $y \rightsquigarrow x \stackrel{d}{\to} v_0$.
Once a variable is fixed and we have updated the bounds
caused by the fixing, then the variable plays no further role.
We can ignore fixed variables $x$ in \textsc{IncSat}, \textsc{IncImp},
\textsc{IncLB} and \textsc{IncUB} without compromising
correctness using a result analogous to \Cref{thm:diff}.
Hence edges to and from fixed variables need not be considered in these
algorithms.

After every addition of a constraint we need to check the constraints
in $C'$ for implication.  We can do better than checking every one if we keep
track of what changes have been made by the latest constraints addition.

Cotton and Maler~\cite{Cotton:Maler:06} show how we can restrict the
shortest path calculations in \textsc{IncImp} to those that may
actually decrease a shortest path.
There is a shorter path from $x$ to $y$ via new edge $(u,v,d)$ if and only if
there is a shorter path from $x$ to $v$ or $u$ to $y$ via this edge.
If we calculate $\delta_v^\leftarrow$ (as opposed
to $\delta_u^\leftarrow$) and $\delta_u^\rightarrow$
we can modify Dijkstra's algorithm to restrict attention to those nodes
that give a shorter path using the edge $(u,v,d)$ (see~\cite{Cotton:Maler:06}
for details).
We can calculate the weight of a shortest path from $x$ to $y$ via $(u,v,d)$
as $\delta_v^\leftarrow(x) - d + \delta_u^\rightarrow(y)$.

Note that $wSP(x,y) = w + \pi(y) - \pi(x)$ where $w$ is the weight of a
shortest path
from $x$ to $y$ in
the reduced cost graph. Clearly $w \geq 0$ and hence $wSP(x,y) \geq \pi(y) -
\pi(x)$.
Hence if $\pi(y) - \pi(x) > d$ we know that $x - y \leq d$ is not implied
by the constraints. Note also that $\pi$ only changes by \emph{reducing} the
values at particular nodes (See \textsc{IncSAT}).
Hence if $\pi(y) - \pi(x) > d$ then at some future
time $\pi'(y) - \pi'(x) > d$ unless the potential function value at $y$
changed.

Given both the above observations we can improve the checking of implication
of constraints in $C'$ by only checking constraints $x - y \leq d \in C'$
if $\delta_u^\rightarrow(y)$ made use of the edge $(u,v,d)$
in its calculation, and if $\pi(y) - \pi(x) > d$
in the past, then $\pi(y)$ has changed.

\section{Difference Logic in Huub}
\label{sec:huub}

Huub~\cite{Huub, cp2025c} is a modern CP solver developed in Rust that is based on the IPASIR-UP~\cite{fazekas-sat23} interface to modern SAT solvers. This section provides specifics about the interaction of a difference logic component in a modern CP solver, about pre-processing and simplification, and proposes several control options that are evaluated in the experiment section.

\subsection{Difference Logic Representation}

In Huub, difference logic is represented by a graph $G=(V,E,E')$, where $V\subseteq\mathcal{V}$ is a subset of the variables, and $E$ is the set of active edges $u\glob d v$ representing active difference constraints. Additionally, $E'$ is a set of implied edges $u \impl bd v$ representing the implied (half-reified) difference constraint $b\half u-v\leq d$ for an unknown Boolean variable $b$. These edges will either be dropped if $b\mapsto\bot$, or moved to the active edges $E$ if $b\mapsto\top$. Note that this representation is enough to cover all kinds of difference constraints, including (half-)reified equality and inequality constraints based on the following mapping:
\begin{itemize}
    \item $x-y\leq d$: Directly added to $E$.
    \item $b\half x-y\leq d$: Directly added to $E'$.
    \item $b\Leftrightarrow x-y\leq d$: Transformed to $b \half x-y\leq d$ and $\neg b\half y-x\leq-d-1$.
    \item $x-y=d$: One variable is a transformation of the other (called a view in the solver), no need to represent as a constraint.
    \item $b\half x-y=d$: Transformed to $b\half x-y\leq d$ and $b\half y-x\leq-d$.
    \item $x-y\neq d$: Transformed to $b\half  x-y\leq d-1$ and $\neg b\half y-x\leq-d-1$ for a new Boolean variable $b$.
    \item $b\half x-y\neq d$: Transformed to $b\half c\lor e$, $\neg c\lor\neg e$, $c\half x-y\leq d-1$ and $e\half y-x\leq-d-1$ for new Boolean variables $c$ and $e$.
    \item $b\Leftrightarrow x-y=d$: Transformed to $b\half x-y=d$ and $\neg b\half x-y\neq d$, then further transformation as defined above.
\end{itemize}

Therefore, all versions of potentially (half-)reified difference, equality, and inequality constraints on two variables can be represented in difference logic. However, for equality and inequality, additional Boolean variables need to be introduced, and multiple difference constraints are generated. This might hurt performance, but it might also allow better propagation within the difference logic graph. Therefore, a control parameter for the difference logic component is introduced:
\begin{description}
    \item[\bfseries Level 0:] Difference logic is deactivated.
    \item[\bfseries Level 1:] Include global and (half-)reified difference constraints.
    \item[\bfseries Level 2:] Additionally include half-reified equality constraints.
    \item[\bfseries Level 3:] Additionally include inequality and reified equality constraints.
\end{description}

Depending on the selected level, the included constraints are captured when the model is built. Internally, each node $v$ is associated with a list of incoming active edges $E_v^+$, outgoing active edges $E_v^-$, incoming implied edges ${E'}_v^+$, and outgoing implied edges ${E'}_v^-$. Additionally, each Boolean variable $b$ that is used in implied edges is associated with its implied edges $E_b$. During search, these edges dynamically move from implied to active, or are dropped from the implied edges. However, these changes need to be reverted if the solver backtracks. For this purpose, two types of trailed lists are used:
\begin{description}
    \item[\bfseries Trailed Appending:] Active edges are stored in lists that trail the current active length $\ell$. A new element is stored at index $\ell$ (the array is 0-indexed), and backtracking resets the active length $\ell$. Therefore, recently added elements are not part of the active array any more after backtracking. Note that they are not actually removed, but just overwritten on future extensions. This design allows both setting implied edges to active, and adding difference constraints from external sources.
    \item[\bfseries Trailed Closing:] Implied edges are stored in a type of trailed todo-list. Elements with lower index than the trailed value $c$ are marked as closed, while other elements are still considered open. If an element $i$ with $i\geq c$ is closed, its position is swapped with $c$, and $c$ is increased by 1. On backtracking, $c$ is decreased again, moving the element back into the active section. Note that the order of active elements is not preserved, but closing and backtracking are done in constant time. Additionally, since closed elements are kept at the beginning of the list, trailed appending could be combined with this type for dynamic addition of implied edges. Also consider that each implied edge $u \impl bd v$ is contained in three lists: outgoing from $u$, incoming to $v$, and implied by $b$. Therefore, each edge is associated with its current index in all three lists, updating them on every swap. Then, from each list, closing the edge can be triggered in constant time for all three lists containing the edge.
\end{description}

While in theory a priority queue that allows dynamic priority updates or even a Fibonacci heap is required to achieve the given complexity results, in practice a simple priority queue where duplicate entries become stale and are skipped has shown to be more efficient and is used in the implementation.

\subsection{Initial Simplification}

A modern CP solver typically solves a problem in two major stages. First, a simplification stage tries to reduce the problem as much as possible without going into search, by propagating constraints at the root level, identifying variables that are just projections of other variables, and simplifying constraints when possible. Only when no further simplification is possible, search is started. Since simplification is only done once, it is very important for the performance of a solver to simplify as much as possible. Difference logic allows  simplification that goes beyond what propagators for the individual constraints could do, and may be able to significantly reduce the complexity of the difference logic graph, or even prove infeasibility without starting search. Already before simplification starts for the difference logic component, Huub identifies variables that are views of other variables, i.e., $y$ is a linear transformation of another variable $x$ defined by $x=k\cdot y+c$.

The first time difference logic is called, the whole graph needs to be checked for consistency, and the potential function needs to be defined. While starting with an empty graph and iteratively adding all active edges would be possible, it is more efficient to deal with the whole graph at once, applying the steps in \Cref{alg:init}.

\begin{algorithm2e}[t]
   \caption{\textsc{Initialization}}
   \label{alg:init}
   \SetKw{KwAll}{all}
   \SetKw{KwAnd}{and}
   \KwIn{$G = (V,E,E')$ a difference logic graph.}
   \KwOut{A simplified difference logic graph $G'$ and a valid potential function $\pi$, or \emph{SUBSUMED}, or raise a \emph{CONFLICT}}
   $\pi$ := \textsc{BellmanFord}($G$)\tcp*{\emph{CONFLICT} on negative cycle}
   $G'$ := \textsc{PropagateBounds}($G$, $\pi$)\tcp*{\emph{CONFLICT} on bounds or Booleans}
   $G'$ := \textsc{PropagateBooleans}($G'$, $\pi$)\tcp*{\emph{CONFLICT} on bounds}
   $G'$ := \textsc{RemoveNodes}($G'$)\;
   $G'$ := \textsc{Johnsons}($G'$, $\pi$)\tcp*{\emph{CONFLICT} on Boolean variables}
   $G'$ := \textsc{RemoveNodes}($G'$)\;
   $G'$ := \textsc{RemoveBooleans}($G'$)\;
   \If{$G'$ is empty}{\Return{\emph{SUBSUMED}}}
   \Return{$G'$, $\pi$}
\end{algorithm2e}

The first step is to run the Bellman Ford algorithm from an imaginary initial vertex $v_0$ that is connected via an edge $v_0\glob 0 v$ to each other vertex in $V$. It computes the shortest path from $v_0$ to each vertex $v\in V$ in $\cO(|V|\cdot|E|)$ while being able to deal with negative edge weights. First, it allows us to detect cycles of negative length (which would lead to a shortest path of $-\infty$), leading to a root node proof of infeasibility. Second, setting $\pi(v)$ to the length of the shortest path to $v$ is a valid potential function, which is returned if no failure is detected~\cite{cormen2022introduction}.

Next, an initial propagation of bounds is performed using \textsc{IncLB} and \textsc{IncUB} setting the sets of nodes to evaluate $\mathcal{V}_l$ and $\mathcal{V}_u$ to the set of all nodes in the difference logic graph $V$. This entails removing edges from $E'$ that are implied by bounds, or setting $b\mapsto\bot$ for edges in $E'$ where the inverse is implied. Initial propagation of Boolean variables evaluates all Boolean variables $b$ that imply edges in $E'$ which already have a fixed value (either from the propagation of bounds or externally). Implied edges are dropped if $b\mapsto\bot$, or moved to $E$ by calling \textsc{IncSat} if $b\mapsto\top$.

In a next step, nodes are removed if they already have a fixed value, or if there are no edges connecting them to the remainder of the graph (neither in $E$ nor $E'$). In the first case, implied edges might still exist, e.g., $u$ is fixed, but $u \impl bd v\in E'$ for an unknown Boolean variable $b$. Now the implied edge reduces to an implied bound constraint for $v$, $b\Rightarrow v\geq \theta(u)-d$, which is passed back to the solver to be handled directly (analogously $b\Rightarrow u\leq \theta(v)+d$ if $v$ is fixed).

The last major step in the initialization is to compute all pairwise shortest paths in the active graph by repeated execution of Dijkstra's algorithm, which together with the initialization of the potential function by Bellman Ford constitutes Johnson's algorithm~\cite{cormen2022introduction}. It runs in time complexity $O(|V|^2\log|V|+|V||E|)$, which can be very significant for large graphs, but can lead to substantial simplifications of the difference logic graph which benefit all subsequent evaluations during search. The simplifications performed with the resulting matrix of shortest paths between all vertices $\mathcal{D}_{uv}$ are:
\begin{description}
    \item[\bfseries Removal of redundant global edges:]
        $u\glob d v$ is removed if $d>\mathcal{D}_{uv}$, additionally, at most one edge from $u$ to $v$ is kept if multiple edges $u\glob{\mathcal{D}_{uv}} v$ are defined.
    \item[\bfseries Removal of redundant implied edges:]
        $u\impl bd v$ is removed if $d\geq\mathcal{D}_{uv}$.
    \item[\bfseries Fixing Boolean variables to $\bot$:]
        Addition of $u\impl bd v$ would lead to a negative cycle if $-d>\mathcal{D}_{uv}$, leading to fixing $b\mapsto\bot$.
    \item[\bfseries Node unification:]
        $\mathcal{D}_{uu}=0$ entails that there is a cycle of length $0$. The only valid solution to such a cycle is to set $u-v=d$ for each edge $u\glob dv$ in the cycle. Therefore, variable $v$ can be replaced with a view $v\mapsto u-d$, collapsing all variables in the cycle to a single variable.
\end{description}

Finally, nodes that got fixed or isolated during the previous step are removed again, and Boolean variables not connected to any implied edges any more (because all of them got removed) are also removed from the infrastructure in the graph. Afterwards, the first iterative simplification is triggered.

\subsection{Iterative Simplification}

Since simplification of other propagators can allow further simplification in difference logic, these iterative simplification steps are executed until no propagator can simplify any further. The iterative simplification performs the following steps:
\begin{itemize}
    \item Merge unified variables
    \item Propagation
    \item Removal of nodes and Boolean variables
    \item Check for subsumption
\end{itemize}

Already before initialization, variables that are just linear transformations of other variables are replaced by views. However, further views can be identified during simplification both by the difference logic component and other propagators. Additionally, in difference logic, multiple variables that are views of the same underlying variable can be represented by the same node in the difference logic graph. Assume $v=u+c$, even if $u$ and $v$ are originally not connected by any difference constraints, they can be represented by the same node in the difference logic graph. Simply replace every edge $v\glob dw$ by $u\glob{d-c} w$, analogously for edges in the opposite direction and implied edges. Only views with a multiplicative factor like $v=2\cdot u$ are represented as separate nodes, while all additive transformations are merged. This check is performed first in each iterative simplification.

At the end of propagation in each iterative simplification, again fixed and isolated nodes are removed, and irrelevant Boolean variables are removed. Finally, if all nodes have been removed, subsumption is returned. Otherwise, new iterative simplification might be triggered. If the difference logic component is not subsumed, is finally transformed into its propagator form for the search stage of the solver. In this stage, nodes are no longer removed, as all changes now need to be trailed to be able to revert them. Only the addition of edges to $E$ and the removal of edges from $E'$ are possible from this moment on as described in the trailing infrastructure.

\subsection{Propagation}

During search, propagation of the difference logic component is triggered every time a bound of any node changes, or any of the Boolean variables that imply edges in $E'$ is fixed. These changes are communicated by advisors~\cite{advisors}: The difference logic component registers for updates of all node variables and relevant Boolean variables, and gets notified of each of their changes. Therefore, no exhaustive search of bound changes over the nodes is required, but $\mathcal{V}_l$ and $\mathcal{V}_u$ for \textsc{IncLB} and \textsc{IncUB} as well as the set of fixed Boolean variables $\mathcal{B}$ are directly built based on advisor calls.

The propagation of bound changes first executes \textsc{IncLB} and \textsc{IncUB}, followed by checking all implied edges connected to nodes with bound updates if they are implied by the bounds (remove them from $E'$) or falsified by the bounds (remove them from $E'$ and propagate the corresponding Boolean $b\mapsto\bot$).

The propagation of Boolean changes entails dropping an edge from $E'$ if the Boolean is set to $\bot$, or moving it from $E'$ to $E$ if set to $\top$, which leads to checking for conflicts and potentially updating $\pi$ via \textsc{IncSat}, and optionally checking the impact on further implied edges via \textsc{IncImp}. Note that this check via \textsc{IncImp} can be turned on or off via a parameter. While skipping this step can lead to conflicts that are only discovered later in the search (e.g., \textsc{IncImp} would propagate $b\mapsto\bot$ for $x\impl bd y$, instead $b$ is later set to true and \textsc{IncSat} runs into a conflict), running \textsc{IncImp} is the most costly part of propagation, and the benefit of skipping it can outweigh the cost even though \textsc{IncImp} implements the improvement by Cotton and Maler~\cite{Cotton:Maler:06} calculating $\delta_v^\leftarrow$ and $\delta_u^\rightarrow$ as described in~\Cref{sec:optimizations}.

Additionally, propagation of bound changes and fixed Boolean variables is independent of each other even though they both operate on the same graph and one kind of propagation might trigger the other as a consequence. This entails that the propagations can be triggered separately, and therefore with different priorities \texttt{PrioBounds} and \texttt{PrioBools}.

\section{Experiments}\label{sec:exp}

We consider three implementations in Huub in the experiments:
\textsf{sp}: only separate propagators for each constraint (the standard
approach);
\textsf{gp}: a global propagator using \textsf{IncSat}
and \textsf{IncImp} on difference and reified difference constraints,
and \textsf{IncLB} and \textsf{IncUB} to handle bounds.
\textsf{gp+s}: \textsf{gp} plus the additional simplification stage using Johnson's algorithm followed by the removal of redundant edges and nodes. We additionally evaluate several different parameter settings for \textsf{gp+s}.
We do not consider the base implementation of
\Cref{sec:base} since it 
is an order of magnitude slower than \textsf{gp}
when there are bounds constraints.

All the experiments were carried out on a cluster equipped with Intel Xeon Silver 4314 (2.4 GHz, 24 MB Cache, no hyperthreading) running Ubuntu 22.04 set up for maximum repeatability \cite{cluster_2024}. Each experiment uses a single thread and at most $12.8$ GB of RAM. Huub~\cite{Huub} is evaluated in a development build\footnote{\url{https://github.com/kletzi/huub/tree/feat/difference_logic_benchmark}} and compared to Chuffed 0.13.2~\cite{chuffed:github} and Google OR-Tools CP-SAT 9.15.6755~\cite{cpsatlp} as the two of the best performing solvers in recent MiniZinc challenges~\cite{stuckey2014minizinc}. We specifically compare to other solvers that are applicable to general problems specified in MiniZinc, while other solvers implementing a global difference propagator (see \Cref{sec:related}) are not general purpose finite domain solvers, but more specialized to specific domains.
Full tables of the results on all instances are available in the online supplementary material~\cite{data}.

\subsection{Worst case behavior for separate propagators}
\label{sec:worst}

\Cref{ex:n3} shows a problem where the use of single propagators
could be terrible compared to the global propagator \textsf{gp} 
just for bounds propagation.
In the first experiment, we validate this experimentally,
comparing the separate propagators \textsf{sp} and the
global propagator \textsf{gp} on this problem. \textsf{sp} uses the standard implementation of linear inequality propagators in Huub.
For different number of variables $n$, we applied
consecutive lower bound updates of $y_0$ from $(k-1)n$ to $kn$, $k \in 1..10$,
and measured the time needed to reach these 10 fixpoints.
The results are show in \Cref{fig:worst}. The \textsf{gp} times are all below $70$ ms, while the \textsf{sp} times show the expected cubic growth.
One can see from the results that very bad propagation
behavior can occur in practical solvers when using single propagators, while a
global propagator only needs to run once to reach a fixpoint.

\begin{figure}
\centering
\begin{tikzpicture}
  \begin{axis}[
    xlabel={$n$},
    ylabel={Time (s)},
    legend pos=north west,
    grid=major,
    grid style={dashed, gray!30},
    xtick distance=100,
    width=12cm,
    height=8cm,
    mark size=2pt,
  ]

    \addplot[red, mark=square, thick] coordinates {
      (100,  0.39200)
      (200,  3.56167)
      (300,  11.27867)
      (400,  25.21700)
      (500,  46.87067)
      (600,  80.67833)
      (700,  124.95367)
      (800,  183.87200)
      (900,  258.61667)
      (1000, 351.17133)
    };
    \addlegendentry{sp}
 
    \addplot[blue, mark=o, thick] coordinates {
      (100,  0.00100)
      (200,  0.00400)
      (300,  0.00800)
      (400,  0.01400)
      (500,  0.02000)
      (600,  0.02700)
      (700,  0.03500)
      (800,  0.04500)
      (900,  0.05600)
      (1000, 0.06800)
    };
    \addlegendentry{gp}
 
  \end{axis}
\end{tikzpicture}
\caption{Propagation times (s) to reach fixpoint for \Cref{ex:n3} 
as a function of $n$}
\label{fig:worst}
\end{figure}

\Cref{fig:worst_log} shows the comparison in logarithmic scale, including not just the propagation time, but also the initialization and simplification time. For \textsf{gp}, initialization and simplification actually takes the majority of the time, but still stays below $0.25$ s for all experiments. For \textsf{sp}, initialization time is similar to \textsf{gp} in absolute values, but barely notable compared to the huge propagation time.

Additionally, we compare \textsf{gp+s}, which includes the full simplification using Johnson's algorithm. While this algorithm also has cubic runtime like \textsf{sp}, runtime is still orders of magnitude lower compared to \textsf{sp}. First, the cubic cost in this case occurs only once during simplification, while for \textsf{sp} it repeatedly occurs in propagation. Second, for \textsf{gp+s} the cubic runtime occurs within one algorithm in the propagator, while \textsf{sp} shows a cubic invokation of different propagators which is orders of magnitude more costly. Therefore, even for worst-case examples, simplification shows acceptable runtime cost. Note that a similar behavior to \textsf{gp+s} occurs in a modified version of the worst-case example where all non-zero arcs are negative due to the initial computation of the potential function using Bellman-Ford.

\begin{figure}
\centering
\begin{tikzpicture}
  \begin{axis}[
    xlabel={$n$},
    ylabel={Time (s)},
    legend pos=north west,
    grid=major,
    grid style={dashed, gray!30},
    xtick distance=100,
    width=12cm,
    height=8cm,
    mark size=2pt,
    ymode=log,
  ]
  
    \addplot[red, mark=square, thick] coordinates {
      (100,  0.39300)
      (200,  3.56967)
      (300,  11.29867)
      (400,  25.25200)
      (500,  46.92367)
      (600,  80.75933)
      (700,  125.06033)
      (800,  184.01233)
      (900,  258.79233)
      (1000, 351.38800)
    };
    \addlegendentry{sp}
 
    \addplot[green!60!black, mark=triangle, thick] coordinates {
      (100,  0.00933)
      (200,  0.06033)
      (300,  0.15633)
      (400,  0.33600)
      (500,  0.64800)
      (600,  1.08733)
      (700,  1.66533)
      (800,  2.47700)
      (900,  3.47067)
      (1000, 4.73400)
    };
    \addlegendentry{gp+s}
 
    \addplot[blue, mark=o, thick] coordinates {
      (100,  0.00100)
      (200,  0.00700)
      (300,  0.01700)
      (400,  0.03200)
      (500,  0.04633)
      (600,  0.06867)
      (700,  0.10267)
      (800,  0.15033)
      (900,  0.18467)
      (1000, 0.22033)
    };
    \addlegendentry{gp}
 
  \end{axis}
\end{tikzpicture}
\caption{Total times (s) in log scale to reach fixpoint for \Cref{ex:n3} including initialization time}
\label{fig:worst_log}
\end{figure}

\subsection{Benchmark Problems}

In the following sections, we evaluate three sets of benchmark instances to evaluate the effect of several configuration choices for difference logic, and to compare the performance with and without difference logic.

To test difference logic on a wide range of different problems, we evaluate the repository of MiniZinc challenge benchmarks~\cite{challenge} that contains all previous problems used in the yearly MiniZinc challenge since 2008. The problems are filtered to remove those that do not contain any difference constraints as captured by the propagator in this paper for any instance. Problems occurring in different years are only kept in their most recent appearance. The resulting benchmark set contains 626 instances and is run with the same timeout as available in the MiniZinc challenge, which is 20 minutes.

Further we use two specific problems that contain difference constraints as a major component of their model for evaluation. The second benchmark is a set of complex resource-constrained project scheduling benchmarks using producer / consumer constraints \texttt{ProdCons}~\cite{schutt2016producer} and precedences. The model is taken directly from this reference using the \texttt{dAfter} decomposition. The instances range from 10 to 1000 activities using 5 resources. The small instance set for analysing different configurations using 10 instances per size contains 120 instances in total, while the full data set contains 1080 instances.

The third benchmark in the evaluation is \texttt{RCPSPmax}~\cite{schutt2013solving}, which is a variant of resource-constraint project scheduling with minimal and maximal time lags. Already the decision variant of the problem is NP-hard. Again, there is a small data set using 10 instances per size for 120 instances in total, and a full data set with 1130 instances from the CD, UBO, and SM datasets in literature ranging from 10 to 1000 activities using 5 resources (per dataset and size at most 100 instances).

\subsection{Evaluation of Different Configurations}

In this section we evaluate a range of difference logic configurations on the whole MiniZinc challenge benchmark and the small instance sets for \texttt{ProdCons} and \texttt{RCPSPmax}. The following choices are evaluated in all possible 162 combinations:
\begin{itemize}
    \item \textbf{Level of captured constraints:} 1 (global and (half-)reified difference constraints), 2 (+ half-reified equalities), 3 (+ inequalities and reified equalities)
    \item \textbf{Priorities:} Lowest (0), medium (2), highest (4) individually for bound propagation and Boolean propagation
    \item \textbf{Implied check:} Use of \textsc{IncImp} for early detection of conflicts (on or off)
    \item \textbf{Bound propagation level:} Lazy + lifted (\texttt{Lazy}), Eager + lifted (\texttt{Lifted}), Eager not lifted (\texttt{Simple})
\end{itemize}

\subsubsection{MiniZinc Challenge Benchmarks}

\begin{figure}[t]
    \centering
    \begin{tikzpicture}
\begin{axis}[
    xlabel={Points},
    ytick={1, 2, 3},
    yticklabels={{Level 1}, {Level 2}, {Level 3}},
    xmajorgrids=true,
    grid style=dashed,
    boxplot/draw direction=x,
    height=\dimexpr 1cm * 3 + 1cm\relax,
    width=0.88\linewidth,
    enlarge y limits=0.1,
    y dir=reverse,
    legend style={draw=none},
    legend image post style={draw=none},
]
\addplot[
    boxplot prepared={
        lower whisker=300.37,
        lower quartile=306.08,
        median=309.47,
        upper quartile=313.798,
        upper whisker=321.91
    },
    fill=blue!60, draw=blue!80!black, fill opacity=0.7
] coordinates {};
\addplot[
    boxplot prepared={
        lower whisker=298.53,
        lower quartile=303.835,
        median=307.515,
        upper quartile=312.415,
        upper whisker=321.28
    },
    fill=red!60, draw=red!80!black, fill opacity=0.7
] coordinates {};
\addplot[
    boxplot prepared={
        lower whisker=290.32,
        lower quartile=300.215,
        median=303.86,
        upper quartile=308.495,
        upper whisker=317.87
    },
    fill=green!80!black, draw=green!60!black, fill opacity=0.7
] coordinates {};
\draw[red, thick, dashed] (axis cs:310, \pgfkeysvalueof{/pgfplots/ymin}) -- (axis cs:310, \pgfkeysvalueof{/pgfplots/ymax});
\end{axis}
\end{tikzpicture}
    \caption{Points on the MiniZinc benchmark for different levels of constraint acquisition}
    \label{fig:level}
\end{figure}

Since there are 162 configurations to evaluate, we use the tool \texttt{mzn-bench}\footnote{\url{https://github.com/MiniZinc/mzn-bench}} to calculate a score of each configuration in comparison to the baseline configuration (level 0, difference logic is off, i.e. the standard solver \textsf{sp}) using the scoring system of the MiniZinc challenge~\cite{challenge}. For each instance, this score awards a point to the solver providing a better solution status (optimal / unsatisfiable if the other solver did not reach a proof, better objective if no solver reached a proof), or splits the points proportionally to the runtime if both solvers provide a proof of optimality or unsatisfiability).

For each of the following boxplots, we collect all results sharing a particular configuration setting (with varying other configurations) and show minimum, maximum, quartiles (Q1, Q3), and median in each graph. A dotted line at 310 points indicates the amount of points needed to match the performance of the baseline configuration.

\Cref{fig:level} shows the impact of the level of constraints captured by difference logic. Clearly, level 3 is outperformed by the other levels, with even Q3 below the baseline. Level 1 outperforms the other levels, indicating that the extra effort to include equality and inequality constraints by introducing extra variables generates an overhead that does not pay off by reaching better propagation in the difference logic graph.

\begin{figure}[t]
    \centering
   \begin{tikzpicture}
\begin{axis}[
    xlabel={Points},
    ytick={1, 2, 3},
    yticklabels={{Lowest}, {Medium}, {Highest}},
    xmajorgrids=true,
    grid style=dashed,
    boxplot/draw direction=x,
    height=\dimexpr 1cm * 3 + 1cm\relax,
    width=0.88\linewidth,
    enlarge y limits=0.1,
    y dir=reverse,
    legend style={draw=none},
    legend image post style={draw=none},
]
\addplot[
    boxplot prepared={
        lower whisker=297.61,
        lower quartile=307.168,
        median=312.045,
        upper quartile=316.315,
        upper whisker=321.91
    },
    fill=blue!60, draw=blue!80!black, fill opacity=0.7
] coordinates {};
\addplot[
    boxplot prepared={
        lower whisker=290.32,
        lower quartile=302.567,
        median=305.74,
        upper quartile=309.007,
        upper whisker=317.66
    },
    fill=red!60, draw=red!80!black, fill opacity=0.7
] coordinates {};
\addplot[
    boxplot prepared={
        lower whisker=295.92,
        lower quartile=302.033,
        median=305.265,
        upper quartile=310.255,
        upper whisker=319.73
    },
    fill=green!80!black, draw=green!60!black, fill opacity=0.7
] coordinates {};
\draw[red, thick, dashed] (axis cs:310, \pgfkeysvalueof{/pgfplots/ymin}) -- (axis cs:310, \pgfkeysvalueof{/pgfplots/ymax});
\end{axis}
\end{tikzpicture}
    \caption{Points on the MiniZinc benchmark for different priorities for bound propagation}
    \label{fig:priorities_bounds}
\end{figure}

\begin{figure}[t]
    \centering
   \begin{tikzpicture}
\begin{axis}[
    xlabel={Points},
    ytick={1, 2, 3},
    yticklabels={{Lowest}, {Medium}, {Highest}},
    xmajorgrids=true,
    grid style=dashed,
    boxplot/draw direction=x,
    height=\dimexpr 1cm * 3 + 1cm\relax,
    width=0.88\linewidth,
    enlarge y limits=0.1,
    y dir=reverse,
    legend style={draw=none},
    legend image post style={draw=none},
]
\addplot[
    boxplot prepared={
        lower whisker=295.92,
        lower quartile=300.788,
        median=304.505,
        upper quartile=307.2,
        upper whisker=313.76
    },
    fill=blue!60, draw=blue!80!black, fill opacity=0.7
] coordinates {};
\addplot[
    boxplot prepared={
        lower whisker=297.61,
        lower quartile=305.345,
        median=307.915,
        upper quartile=314.4,
        upper whisker=320.4
    },
    fill=red!60, draw=red!80!black, fill opacity=0.7
] coordinates {};
\addplot[
    boxplot prepared={
        lower whisker=290.32,
        lower quartile=304.558,
        median=310.695,
        upper quartile=314.423,
        upper whisker=321.91
    },
    fill=green!80!black, draw=green!60!black, fill opacity=0.7
] coordinates {};
\draw[red, thick, dashed] (axis cs:310, \pgfkeysvalueof{/pgfplots/ymin}) -- (axis cs:310, \pgfkeysvalueof{/pgfplots/ymax});
\end{axis}
\end{tikzpicture}
    \caption{Points on the MiniZinc benchmark for different priorities for Boolean propagation}
    \label{fig:priorities_bool}
\end{figure}

\Cref{fig:priorities_bounds} shows the results for bound propagation priorities, while \Cref{fig:priorities_bool} shows Boolean propagation priorities. The results are very interesting as they show that for bound propagation the lowest setting is beneficial, outperforming the baseline in most cases, while medium and highest mostly reduce the number of points. In contrast, for Boolean propagation the lowest setting is clearly inferior, and the highest setting, even considering some negative outliers, can provide the best results. These results show that separating the priorities of the different propagation stages is valuable for the performance of difference logic.

\begin{figure}[t]
    \centering
   \begin{tikzpicture}
\begin{axis}[
    xlabel={Points},
    ytick={1, 2},
    yticklabels={{\textsc{IncImp} off}, {\textsc{IncImp} on}},
    xmajorgrids=true,
    grid style=dashed,
    boxplot/draw direction=x,
    height=\dimexpr 1cm * 2 + 1.5cm\relax,
    width=0.88\linewidth,
    enlarge y limits=0.2,
    y dir=reverse,
    legend style={draw=none},
    legend image post style={draw=none},
]
\addplot[
    boxplot prepared={
        lower whisker=295.92,
        lower quartile=305.45,
        median=310.43,
        upper quartile=314.57,
        upper whisker=321.91
    },
    fill=blue!60, draw=blue!80!black, fill opacity=0.7
] coordinates {};
\addplot[
    boxplot prepared={
        lower whisker=290.32,
        lower quartile=300.88,
        median=305.13,
        upper quartile=309.18,
        upper whisker=317.1
    },
    fill=red!60, draw=red!80!black, fill opacity=0.7
] coordinates {};
\draw[red, thick, dashed] (axis cs:310, \pgfkeysvalueof{/pgfplots/ymin}) -- (axis cs:310, \pgfkeysvalueof{/pgfplots/ymax});
\end{axis}
\end{tikzpicture}
    \caption{Points on the MiniZinc benchmark depending on the use of \textsc{IncImp}}
    \label{fig:incimp}
\end{figure}

\Cref{fig:incimp} shows the difference between activation and deactivation of \textsc{IncImp}. It is clear that activation mostly reduces the scores, while deactivation shows much better results. This indicates that the overhead of the \textsc{IncImp} check is in general not worth the effort even though earlier propagation of Boolean variables can be achieved.

\begin{figure}[t]
    \centering
   \begin{tikzpicture}
\begin{axis}[
    xlabel={Points},
    ytick={1, 2, 3},
    yticklabels={{\texttt{Simple}}, {\texttt{Lifted}}, {\texttt{Lazy}}},
    xmajorgrids=true,
    grid style=dashed,
    boxplot/draw direction=x,
    height=\dimexpr 1cm * 3 + 1cm\relax,
    width=0.88\linewidth,
    enlarge y limits=0.1,
    y dir=reverse,
    legend style={draw=none},
    legend image post style={draw=none},
]
\addplot[
    boxplot prepared={
        lower whisker=290.32,
        lower quartile=302.42,
        median=306.345,
        upper quartile=309.007,
        upper whisker=321.91
    },
    fill=blue!60, draw=blue!80!black, fill opacity=0.7
] coordinates {};
\addplot[
    boxplot prepared={
        lower whisker=295.92,
        lower quartile=302.777,
        median=308.255,
        upper quartile=314.94,
        upper whisker=321.28
    },
    fill=red!60, draw=red!80!black, fill opacity=0.7
] coordinates {};
\addplot[
    boxplot prepared={
        lower whisker=298.42,
        lower quartile=304.045,
        median=308.885,
        upper quartile=313.735,
        upper whisker=321.05
    },
    fill=green!80!black, draw=green!60!black, fill opacity=0.7
] coordinates {};
\draw[red, thick, dashed] (axis cs:310, \pgfkeysvalueof{/pgfplots/ymin}) -- (axis cs:310, \pgfkeysvalueof{/pgfplots/ymax});
\end{axis}
\end{tikzpicture}
    \caption{Points on the MiniZinc benchmark depending on explanation type}
    \label{fig:explanation}
\end{figure}

Finally, \Cref{fig:explanation} shows the comparison of the different types of explanations for Boolean variables propagated by bound changes. While \texttt{Lifted} and \texttt{Lazy} show similar performance, it is clear that \texttt{Simple}, not relying on any lifting, is inferior in most configurations. However, it has a very wide span including both the highest and lowest overall scores.

Next, to validate the individual findings and check for any strong interactions between different settings, we look at the top configurations overall. The top 5 configurations, all above 320 points, all agree on \textsc{IncImp} off and lowest priority for bound propagation. Four of them use highest Boolean propagation priority, one medium. Two configurations use level 1, three use level 2. The explanation types are mixed with 2 \texttt{Lazy}, 2 \texttt{Lifted}, and 1 \texttt{Simple}. These match well with the individual evaluations above.

\subsubsection{Scheduling Benchmarks}

The configuration evaluation was conducted on a very diverse set of benchmarks, ensuring robustness of the results. However, since the importance of difference logic can vary for different problem domains, we also evaluate all configurations on the small data sets for \texttt{ProdCons} and \texttt{RCPSPmax} to see the effects in more detail on problem domains where difference logic plays a larger role.

Note that for \texttt{ProdCons} all configurations outperform the baseline, while for \texttt{RCPSPmax} 157 out of 162 configurations outperform the baseline, showing a clear advantage of using difference logic on these problems. This section summarizes findings on the benchmarks, for the corresponding graphs seen \Cref{sec:prodcons} and \Cref{sec:rcpspmax}.

For both problems there are no additional constraints to capture at levels 2 or 3, therefore the results only show noise.

More differences occur for priorities, especially for \texttt{ProdCons}. For bound propagation, the lowest setting actually has the highest median of points, but also the lowest Q1, Q3, and maximum, showing a less clear overall picture. For Boolean propagation, the picture from the general benchmark is actually reversed, with the lowest setting outperforming the medium and highest setting. Priority settings seem to have less impact for \texttt{RCPSPmax}, with a light trend of lower scores for lowest priorities in both categories. The best overall results are achieved with highest bound and medium Boolean priority. These diverse results show that the best priority setting is more problem-dependent.

\textsc{IncImp} off is again clearly ahead compared to the activated setting. While the result is already strong for \texttt{ProdCons}, for \texttt{RCPSPmax} it is actually so strong that all results using \textsc{IncImp} score strictly below all results deactivating it, making this finding the most strongly supported finding among the configuration experiments.

Finally, the choice for explanations is again more problem-specific. \texttt{ProdCons} shows highest points for \texttt{Simple}, closely followed by \texttt{Lifted}, both clearly ahead of \texttt{Lazy}, while \texttt{RCPSPmax} shows \texttt{Lazy} ahead of \texttt{Simple} and \texttt{Lifted} last.

Overall, we choose the configuration with level 1, lowest bound priority, highest Boolean priority, no use of \textsc{IncImp}, and \texttt{Lifted} explanations as our final configuration, since this configuration excels in the diverse MiniZinc challenge benchmark ($320.95$ points, maximum achieved $321.95$), combines all preferred configuration options from the individual investigations on this benchmark, but also performs well on the specific problems ($67.17$ points with maximum achieved $67.87$ on \texttt{ProdCons}, $62.12$ points with maximum achieved $65.44$ on \texttt{RCPSPmax}). For \texttt{RCPSPmax} the top scores require higher priorities for bound propagation and \texttt{Lazy} explanations, but these options did not show consistent enough performance to be selected overall.

\subsection{Comparison of Top Configurations on Full Instance Set}

This section shows the comparison of Huub with and without difference logic to other state-of-the-art solvers on the MiniZinc benchmark and the full benchmarks on \texttt{ProdCons} and \texttt{RCPSPmax}. Specifically, \texttt{Huub} (\textsf{sp}) denotes running without difference logic, \texttt{Huub} (\textsf{gp+s}) denotes running the best configuration of difference logic according to the previous section including simplification, and \texttt{Huub} (\textsf{gp}) denotes the same configuration, but with initial simplification deactivated. For \texttt{ProdCons} and \texttt{RCPSPmax}, all versions of Huub are run with the flag \texttt{--reason-eager 1} to avoid triggering a rare bug in the interface to CaDiCaL on unsatisfiability proofs.

\begin{table}[t]
    \centering
    \begin{tabular}{l|rr|rr|rrr|rr|r}
    \toprule
        & \multicolumn{2}{c|}{Optimal} & \multicolumn{2}{c|}{Unsat.} & \multicolumn{3}{c|}{Satisfied} & Unkn. & Error & \multicolumn{1}{c}{Points} \\
& nr & time & nr & time & \#to & nr & time & nr & nr & vs. Huub \\
\midrule
Huub (\textsf{sp}) & $353$ & $105.62$ & $\mathbf{21}$ & $\mathbf{23.32}$ & $184$ & $50$ & $29.37$ & $16$ & $2$ & $310.00$ \\
Huub (\textsf{gp+s}) & $352$ & $101.74$ & $20$ & $74.53$ & $189$ & $50$ & $33.73$ & $\mathbf{13}$ & $2$ & $320.95$ \\
Huub (\textsf{gp}) & $353$ & $101.95$ & $20$ & $74.81$ & $187$ & $50$ & $37.27$ & $14$ & $2$ & $312.94$ \\
Chuffed & $333$ & $115.45$ & $19$ & $24.01$ & $183$ & $49$ & $62.80$ & $25$ & $17$ & $251.42$ \\
CP-SAT & $\mathbf{370}$ & $\mathbf{98.00}$ & $19$ & $120.64$ & $169$ & $\mathbf{52}$ & $32.22$ & $15$ & $1$ & $\mathbf{343.83}$ \\
\bottomrule
    \end{tabular}
    \caption{Comparison of different solvers on the MiniZinc challenge benchmark}
    \label{tab:compare_challenge}
\end{table}

\Cref{tab:compare_challenge} shows the comparison on the MiniZinc challenge benchmark. The different columns show the number of optimally solved instances, their average runtime, the number of instances proven unsatisfiable, their average runtime, the number of instances where a satisfying solution was found, but timeout occurred (\#to), the number of satisfaction instances solved, their average runtime, the number of unknown instances (no solution found within runtime, Unkn.), the number of instances aborted due to errors,
and the points according to the MiniZinc challenge in direct comparison to \texttt{Huub} (\textsf{sp}).

The evaluation shows that CP-SAT, the winner of the MiniZinc challenge, is still the strongest solver on this general benchmark. As it is a portfolio solver combining different strategies, it is well suited for this benchmark as different strategies are expected to excel at different groups of instances. While it has the highest number of optimal solutions and solved satisfaction instances, as well as the highest number of points, it is outperformed by all three versions of Huub on the number of unsatisfiable instances, and Huub with difference logic has the lowest number of unknown instances on the benchmark. Further, difference logic reduces the gap between Huub and CP-SAT in points by around one third, showing the valuable contribution of this component on the general benchmark.

Without initial simplification, Huub with difference logic is still better than the original in points, but only just, showing the importance of these initial steps in the overall performance. Overall, difference logic does not increase the number of optimal or proven unsatisfiable instances on this benchmark, but the benefit seems to come from finding better solutions faster on more complex instances. Chuffed is inferior to the other competitors, showing fewer optimal solutions, more unknown instances, and more errors.

\begin{table}[t]
    \centering
    \begin{tabular}{l|rr|rr|r|rr|r}
    \toprule
        & \multicolumn{2}{c|}{Optimal} & \multicolumn{2}{c|}{Unsatisfiable} & \multicolumn{1}{c|}{Satisfied} & Unknown & Error & \multicolumn{1}{c}{Points} \\
& nr & time & nr & time & \#to & nr & nr & vs. Huub \\
\midrule
Huub (\textsf{sp}) & $605$ & $59.00$ & $392$ & $9.99$ & $43$ & $39$ & $1$ & $540.00$ \\
Huub (\textsf{gp+s}) & $\mathbf{660}$ & $\mathbf{44.76}$ & $\mathbf{399}$ & $\mathbf{3.86}$ & $19$ & $\mathbf{2}$ & $0$ & $\mathbf{637.50}$ \\
Huub (\textsf{gp}) & $631$ & $64.92$ & $395$ & $7.93$ & $6$ & $48$ & $0$ & $573.40$ \\
Chuffed & $519$ & $55.06$ & $391$ & $29.26$ & $62$ & $108$ & $0$ & $442.50$ \\
CP-SAT & $559$ & $87.93$ & $391$ & $26.25$ & $7$ & $123$ & $0$ & $364.52$ \\
\bottomrule
    \end{tabular}
    \caption{Comparison of different solvers on full \texttt{ProdCons} benchmark}
    \label{tab:compare_prodcons}
\end{table}

\Cref{tab:compare_prodcons} shows the comparison on the full \texttt{ProdCons} benchmark, where Huub already outperforms the competitors without difference logic. Using difference logic, the performance is greatly boosted. More instances are solved to optimality in shorter time, more instances are proven unsatisfiable in shorter time, and only 2 unknown instances remain. These results show the great benefits of difference logic on such scheduling benchmarks. The comparison to the version without simplification also shows the importance of this component. Without simplification, there is still a large improvement in the number of optimal solutions, however, there is also an increase in the number of unknown solutions. The majority of the improvements of using difference logic require working on the simplified graph.

\begin{table}[t]
    \centering
    \begin{tabular}{l|rr|rr|r|rr|r}
    \toprule
        & \multicolumn{2}{c|}{Optimal} & \multicolumn{2}{c|}{Unsatisfiable} & \multicolumn{1}{c|}{Satisfied} & Unknown & Error & \multicolumn{1}{c}{Points} \\
& nr & time & nr & time & \#to & nr & nr & vs. Huub \\
\midrule
Huub (\textsf{sp}) & $876$ & $16.59$ & $\mathbf{172}$ & $1.98$ & $49$ & $32$ & $1$ & $550.00$ \\
Huub (\textsf{gp+s}) & $\mathbf{883}$ & $12.53$ & $\mathbf{172}$ & $\mathbf{<0.01}$ & $44$ & $\mathbf{31}$ & $0$ & $\mathbf{586.55}$ \\
Huub (\textsf{gp}) & $880$ & $11.66$ & $\mathbf{172}$ & $0.26$ & $37$ & $41$ & $0$ & $570.78$ \\
Chuffed & $858$ & $23.84$ & $170$ & $1.62$ & $45$ & $57$ & $0$ & $490.51$ \\
CP-SAT & $853$ & $20.77$ & $\mathbf{172}$ & $6.07$ & $55$ & $49$ & $1$ & $436.93$ \\
\bottomrule
    \end{tabular}
    \caption{Comparison of different solvers on full \texttt{RCPSPmax} benchmark}
    \label{tab:compare_rcpspmax}
\end{table}

Finally, we compare the different options on \texttt{RCPSPmax} in \Cref{tab:compare_rcpspmax}. Again, Huub can outperform the competitors already without difference logic, but further improve with difference logic. Again, difference logic provides more optimal and proven unsatisfiable solutions in less runtime and fewer unknown instances. The benefit of simplification is mostly seen in the number of unknown instances. Remarkably, difference logic with simplification is so efficient in detecting unsatisfiability on this benchmark that the average time is below $0.01$ second, so most of these instances are identified during simplification.

Overall, the results show that difference logic makes Huub an even stronger competitor on a very general set of instances, and helps the solver to excel on scheduling benchmarks where it shows excellent performance in comparison to other state-of-the-art solvers. The simplification stage proves to be an important part of the practical success of difference logic.


\section{Related Work}
\label{sec:related}

Early work on difference logic constraints (also called temporal constraints) was done by
Dechter et al.~\cite{dechter1991temporal}, which introduced the \emph{simple temporal constraints network} for difference logic constraints and the formal framework for reasoning about them using shortest path algorithms.
This network is the same directed constraint graph~$G_C$ as defined in \Cref{def:constraintgraph} on page~\pageref{def:constraintgraph}, in which they also encode bounds constraints on variables with a dummy node.
This work formed a basis for later work for finite domain solvers, e.g., IBM ILOG CP Optimizer.

As mentioned in the introduction, SMT solvers~\cite{smt, smt2} like, e.g., Z3 and cvc5, have been treating difference logic constraints on a global level and reasoning about them using specialized shortest path algorithms~\cite{Cotton:Maler:06} whether the current system is satisfiable or the entailment or disentailment of reified constraints.
Our work adapts these SMT methods for finite domain solvers with and without clause learning.

In the context of CP, the previous published version of this work~\cite{feydy2008global}
in 2008 introduced the global difference logic propagator to the CP community
and implemented it in the G12 finite domain solver engine~\cite{G12}.
Before that difference logic constraints were modelled as separated
linear constraints.

Schutt and Stuckey~\cite{schutt2016producer} implemented the global difference
logic propagator~\cite{feydy2008global} in the Lazy Clause Generation~\cite{feydy2009lazy} solver Chuffed~\cite{chu2011diss, chuffed:github} for solving problems with producer/consumer constraints, but do not provide details how
any propagation is explained, except they refer to the difference logic theory propagator~\cite{Cotton:Maler:06} and mentioned that bounds literals $\lit{\pm x\leq d}$ are treated separately as described in~\cite{feydy2008global}.
They only considered reified constraints $b \leftrightarrow x - y\leq d$, which needed to be statically added to the model before solving, whereas our solver allows dynamic additions of half-reified and reified constraints during the search.

More recently, Hebrard~\cite{hebrard2025disjunctive} introduces the new CP/SAT hybrid solver
Tempo currently specialized for disjunctive scheduling, in which they implement a global difference logic propagator.
Tempo lazily creates bound literals in a branch only when necessary and they only exist
in the subtree rooted at this branch, whereas in our solver they currently exist from the start, but the Huub architecture allows for dynamic trailed additions, which we can explore in the future.
They also separate lower and upper bound propagation and difference logic propagation as we do.
However, their bound propagators uses a version of Dijkstra with a simple FIFO queue,
which directly works on negative edges and has the worst case runtime complexity of $\cO(nm)$ opposed to ours $\cO(n\log n + m)$.

Their difference logic propagator supports dynamic additions of half-reified difference logic constraints, in which two constraints can also be expressed in a XOR-relationship by using the same Boolean variable, but one in the negative form.
They use this relationship to model the non-overlap of two tasks sharing the same disjunctive resource.
For example, the relationship between two tasks \texttt{A} and \texttt{B} models that either \texttt{A} must finish before the start of \texttt{B} or \texttt{B} must finish before the start of \texttt{A}.
Our propagator can support the same relationship, but we also consider equality and inequality constraints of two variables.
Implication of such half-reified constraints are only checked regarding the variable's bounds and their explanation is the same as our explanation \texttt{Lifted}.
They note that checking implication via the difference logic graph is too costly,
but do not provide any empirical results as we do.

They empirically evaluated Tempo on various disjunctive scheduling problems with the objective of minimizing the makespan,
in which they use the global difference logic propagator and the edge-finding rule from the global disjunctive propagator (also called unary or no-overlap propagator) to identify new difference logic constraints (``edges'') between tasks, which are then shared with the difference logic propagator.
In our case, we do not have any propagator that infers new difference logic constraints and shares them with the global difference logic propagator at the moment.
However, our propagator is implemented in a more general solver and evaluated on a wider set of problems involving different objective functions.

\ignore{
\as{My notes about Tempo comparison.}
\begin{itemize}
    \item depending on constraint, it uses lcg explanation or backward explanations
    \item \as{DONE} only considers disjunctive constraints
    \item \as{DONE} only considers makespan objective
    \item bound literals are stored in a sorted list
    \item bounds literals are not linked to any Boolean variable
    \item \as{DONE} considers half-reified constraints
    \item \as{DONE} can use the same Boolean to model XOR relations between any two edges, it is useful for non-overlapping constraints (task a must be before or after task b)
    \item uses a reversible directed graph
    \item \as{DONE} uses the same separation of bound literals in the global difference logic propagator as in \cite{feydy2008global}.
    \item \as{DONE} uses a version of Dijkstra algorithm to update the bounds, but with a simple FIFO queue and a vertex can enter multiple times the queue having the same complexity as the Bellman-Ford algorithm, but can directly work on graphs with negative cycles
    \item \as{DONE} do not infer all entailed edges, because it is too costly. It seems that it is only infers entailed edges if they are entail by variable bounds.
    \item \as{DONE} uses edge-finder algorithm for disjunctive constraint to find new edges
    \item they mentioned that they ``don't use the strongest possible consistency in the different logic reasoner''
    \item they are adding transitive edges to the reversible graph structure
\end{itemize}
}

Aries~\cite{bit2023enhancing} is a dedicated clause learning solver for solving only disjunctive scheduling problems minimizing the makespan that uses a global difference propagator.
The non-overlapping constraints between two tasks sharing the same disjunctive resource are modelled by two reified difference constraints using the same binary variable.
The makespan is modelled with difference logic constraints with any other task, which is the same in our case if the model declares inequalities between makespan and the end of each task.
It uses an adaption of the Bellman-Ford algorithm described in~\cite{cesta1996gaining} to propagate the bounds, which has a worst-case runtime complexity of $\cO(mn)$ compared to $\cO(n\log n + m)$.
Their bounds' propagation is equivalent to our \texttt{Simple} explanation.
\ignore{
\as{My notes about Aries comparison.}
\begin{itemize}
    \item Specialized Clause Learning solver for disjunctive scheduling
    \item considers reified constraints, but not half-reified
    \item Exploits XOR-relation for modelling non-overlap of tasks
    \item Uses an adaption from \cite{cesta1996gaining} for propagating bounds, which is based on the Bellman-Ford algorithm
\end{itemize}
}

IBM ILOG CP Optimizer~\cite{laborie2018ibm} is a specialized solver for planning and scheduling that uses a global difference propagator for propagating the time bounds of the start and end of (optional) interval variables representing the start and end time of an activity.
Since the solver is a closed-source software only limited information about the implementation and internal workings are known.
In~\cite{laborie2018ibm}, they mentioned that their difference logic graph is based on the graph in~\cite{dechter1991temporal} extended by the presence statuses of interval variables.
Bound constraints are explicitly handled in~\cite{dechter1991temporal} compared to our separation.
The improved Bellman-Ford algorithm~\cite{cherkassky1996shortest} is used for the initial propagation of the graph, whereas an extended version of the algorithm described in~\cite{cesta1996gaining} is used for subsequent propagation.
Both have a worst-case runtime complexity $\cO(nm)$, whereas our algorithm for the subsequent propagation has only $\cO(n\log n + m)$.
Compared to our implementation, they also consider optional interval variables leading to different logic constraints, in that the variables can be optional.
They perform time-bound adjustment on those optional variables if their presence status is implied by the other, even in the case if there presence status is unknown.
For that reason, they maintain a global ``implication'' graph about the presence statuses of all optional interval variables.
Furthermore, they identify negative cycles of such difference constraints, for which the presence status of the involved optional variables are implied for each other, and set the presence statuses to ``absent'' in such a case.
Our propagator would be able to handle difference logic constraints with optional variables $x-y\leq d$, but can only propagate them when they are present.
In our case, such a constraint would be modelled by a half-reified constraint $c \Rightarrow x-y \leq d$ and the constraint $c = a\wedge b$ where the Boolean variables $a$ and $b$ represent the presence status of the optional variables $x$ and $y$.

\ignore{
\as{My notes about CPO comparison.}
\begin{itemize}
    \item Using a simple temporal network of Dechter\cite{dechter1991temporal} extended by presence statuses to reason about precedence constraints between start/end relationship of interval variables where the ``distance'' can be a variable or an expression.
    \item a node in the temporal network represent a start or end time-points of interval variables and the length of an edge is the minimal delay
    \item closed-source software
    \item initial propagation of the temporal network is performed by an improved Bellman-Ford algorithm from Cherkassky et al. and an incremental propagation is performed when a time-bound changed, new edge is added, and a new implication detected using an extension of the algorithm by Cesta et al.
    \item they have a logical network about the presence statuses of the interval constraints, which is used for deciding whether bounds of a temporal constraints between conditional interval variables can be performed
    \item time-bounds are not only propagated for ``present'' temporal constraints, but also for ``unknown'' constraints if the present status of the involved variables is implied in at least one direction
    \item propagates the absent value between optional interval variables if a positive length cycle amongst them exist.
\end{itemize}
}

Google OR-Tools CP-SAT solver~\cite{cpsatlp} is an LCG solver that can also make use of a simplex algorithm for solving LP relaxations of linear constraints in the model.
While difference logic constraints are internally represented as two-variables per inquality (TVPI) constraints (i.e. $ax + by \leq d$ where $a$ and $b$ are constant numbers) and no global difference logic propagator exists, it applies a transitive closure algorithm on unit TVPIs (i.e. TVPI's where $a$ and $b$ is $1$ or $-1$) to tighten the bound on the constraint (i.e. $d$), but only at the root level.
For that purpose, the algorithm builds a directly acyclic graph by adding two edges $x\to y$ and $y\to x$ for each constraint and then computes the transitive closure using a specific variable order.
However, it comes with two caveats:
First, if there is a cycle amongst the set of constraints then the algorithm is not executed.
Second, the algorithm has a work limit for computing the transitive closure, once the limit is reached the algorithm stops.
We note that the cycle detection does not consider the length of it and, therefore, does not identify infeasible system.
Beside this algorithm, difference logic constraints and their (half-)reified versions are treated as separated propagators, but propagated in an efficient order to minimize the workload using their specific variable ordering.
Regarding the internal LP solver, CP-SAT does not automatically add difference logic constraints to the LP relaxation, but decides it depending on the linear constraints in the model.
We also ran CP-SAT with settings that always add the difference logic constraints to the LP relaxation, but the results did not improve.

To sum up, the works~\cite{hebrard2025disjunctive,bit2023enhancing,laborie2018ibm,schutt2016producer} implement a version of the global difference logic propagator in a finite domain constraint solver with or without clause learning,
which reasons about the consistency of the system, implication of (half-)reified constraints, and the variables' bounds.
All of them are specific implementation of a group of problems and only tested on them: 
disjunctive scheduling~\cite{bit2023enhancing,hebrard2025disjunctive},
scheduling problems involving producer/consumer constraints~\cite{schutt2016producer}, and
planning and scheduling problems~\cite{laborie2018ibm},
whereas our propagator is implemented in a general purpose solver and tested on a wide range of problems that include difference logic constraints.
In the case of \cite{schutt2016producer,laborie2018ibm}, implementation details are not known.

\section{Conclusion}
\label{sec:conc}

Difference constraints appear widely in constraint programming models,
and the default representation as individual propagators has some
known bad behavior. In this paper we explore 
how a global treatment of difference constraints can improve propagation. 
We explore a number of different design choices. Overall we find that 
applying the global propagator at lowest priority for bounds, highest priority for booleans, not running the \textsc{IncImp} algorithm, and applying initial simplification leads to the best results overall.  

We note that while SMT~\cite{smt,smt2} solvers treat difference 
constraints globally, the naive importing of their methodology is
impractical for finite domain propagation solving because of the
relative importance of bounds propagation for FD. For problems with
bounds constraints the approach outlined in Section 4.1 is an order
of magnitude slower than \textsf{gp}.


The global difference solver opens the possibility of improving the behaviour of CP solvers by instrumenting globals to take advantage of known difference constraints and/or
learn new difference logic consequences of propagation, as in the disjunctive propagator for Tempo. This remains as important future work. 

\bibliography{quellen}

\newpage
\appendix

\section{Configuration Experiments on \texttt{ProdCons}}
\label{sec:prodcons}

This section shows the graphs for the configuration experiments on the small \texttt{ProdCons} instance set. All configurations outperform the baseline which requires more than 60 points.

\begin{figure}[H]
    \centering
    \begin{tikzpicture}
\begin{axis}[
    xmin=59.5,
    xlabel={Points},
    ytick={1, 2, 3},
    yticklabels={{Level 1}, {Level 2}, {Level 3}},
    xmajorgrids=true,
    grid style=dashed,
    boxplot/draw direction=x,
    height=\dimexpr 1cm * 3 + 1cm\relax,
    width=0.88\linewidth,
    enlarge y limits=0.1,
    y dir=reverse,
    legend style={draw=none},
    legend image post style={draw=none},
]
\addplot[
    boxplot prepared={
        lower whisker=61.11,
        lower quartile=64.7875,
        median=65.73,
        upper quartile=66.7425,
        upper whisker=67.82
    },
    fill=blue!60, draw=blue!80!black, fill opacity=0.7
] coordinates {};
\addplot[
    boxplot prepared={
        lower whisker=61.08,
        lower quartile=64.91,
        median=65.64,
        upper quartile=66.815,
        upper whisker=67.74
    },
    fill=red!60, draw=red!80!black, fill opacity=0.7
] coordinates {};
\addplot[
    boxplot prepared={
        lower whisker=61.02,
        lower quartile=64.9325,
        median=65.765,
        upper quartile=66.7975,
        upper whisker=67.87
    },
    fill=green!80!black, draw=green!60!black, fill opacity=0.7
] coordinates {};
\draw[red, thick, dashed] (axis cs:60, \pgfkeysvalueof{/pgfplots/ymin}) -- (axis cs:60, \pgfkeysvalueof{/pgfplots/ymax});
\end{axis}
\end{tikzpicture}
    \caption{Points on the small \texttt{ProdCons} benchmark for different levels of constraint acquisition}
\end{figure}

\begin{figure}[H]
    \centering
   \begin{tikzpicture}
\begin{axis}[
    xmin=59.5,
    xlabel={Points},
    ytick={1, 2, 3},
    yticklabels={{Lowest}, {Medium}, {Highest}},
    xmajorgrids=true,
    grid style=dashed,
    boxplot/draw direction=x,
    height=\dimexpr 1cm * 3 + 1cm\relax,
    width=0.88\linewidth,
    enlarge y limits=0.1,
    y dir=reverse,
    legend style={draw=none},
    legend image post style={draw=none},
]
\addplot[
    boxplot prepared={
        lower whisker=61.99,
        lower quartile=64.2025,
        median=66.13,
        upper quartile=66.74,
        upper whisker=67.29
    },
    fill=blue!60, draw=blue!80!black, fill opacity=0.7
] coordinates {};
\addplot[
    boxplot prepared={
        lower whisker=61.02,
        lower quartile=64.9075,
        median=65.665,
        upper quartile=67.1025,
        upper whisker=67.73
    },
    fill=red!60, draw=red!80!black, fill opacity=0.7
] coordinates {};
\addplot[
    boxplot prepared={
        lower whisker=62.64,
        lower quartile=65.1875,
        median=65.615,
        upper quartile=67.0525,
        upper whisker=67.87
    },
    fill=green!80!black, draw=green!60!black, fill opacity=0.7
] coordinates {};
\draw[red, thick, dashed] (axis cs:60, \pgfkeysvalueof{/pgfplots/ymin}) -- (axis cs:60, \pgfkeysvalueof{/pgfplots/ymax});
\end{axis}
\end{tikzpicture}
    \caption{Points on the small \texttt{ProdCons} benchmark for different priorities for bound propagation}
\end{figure}

\begin{figure}[H]
    \centering
   \begin{tikzpicture}
\begin{axis}[
    xmin=59.5,
    xlabel={Points},
    ytick={1, 2, 3},
    yticklabels={{Lowest}, {Medium}, {Highest}},
    xmajorgrids=true,
    grid style=dashed,
    boxplot/draw direction=x,
    height=\dimexpr 1cm * 3 + 1cm\relax,
    width=0.88\linewidth,
    enlarge y limits=0.1,
    y dir=reverse,
    legend style={draw=none},
    legend image post style={draw=none},
]
\addplot[
    boxplot prepared={
        lower whisker=64.97,
        lower quartile=65.57,
        median=66.66,
        upper quartile=67.4675,
        upper whisker=67.87
    },
    fill=blue!60, draw=blue!80!black, fill opacity=0.7
] coordinates {};
\addplot[
    boxplot prepared={
        lower whisker=61.99,
        lower quartile=64.7475,
        median=65.475,
        upper quartile=66.105,
        upper whisker=67.82
    },
    fill=red!60, draw=red!80!black, fill opacity=0.7
] coordinates {};
\addplot[
    boxplot prepared={
        lower whisker=61.02,
        lower quartile=64.1875,
        median=65.08,
        upper quartile=66.3175,
        upper whisker=67.64
    },
    fill=green!80!black, draw=green!60!black, fill opacity=0.7
] coordinates {};
\draw[red, thick, dashed] (axis cs:60, \pgfkeysvalueof{/pgfplots/ymin}) -- (axis cs:60, \pgfkeysvalueof{/pgfplots/ymax});
\end{axis}
\end{tikzpicture}
    \caption{Points on the small \texttt{ProdCons} benchmark for different priorities for Boolean propagation}
\end{figure}

\begin{figure}[H]
    \centering
   \begin{tikzpicture}
\begin{axis}[
    xmin=59.5,
    xlabel={Points},
    ytick={1, 2},
    yticklabels={{\textsc{IncImp} off}, {\textsc{IncImp} on}},
    xmajorgrids=true,
    grid style=dashed,
    boxplot/draw direction=x,
    height=\dimexpr 1cm * 2 + 1.5cm\relax,
    width=0.88\linewidth,
    enlarge y limits=0.1,
    y dir=reverse,
    legend style={draw=none},
    legend image post style={draw=none},
]
\addplot[
    boxplot prepared={
        lower whisker=64.15,
        lower quartile=65.57,
        median=66.18,
        upper quartile=67.06,
        upper whisker=67.87
    },
    fill=blue!60, draw=blue!80!black, fill opacity=0.7
] coordinates {};
\addplot[
    boxplot prepared={
        lower whisker=61.02,
        lower quartile=64.21,
        median=65.18,
        upper quartile=66.1,
        upper whisker=67.66
    },
    fill=red!60, draw=red!80!black, fill opacity=0.7
] coordinates {};
\draw[red, thick, dashed] (axis cs:60, \pgfkeysvalueof{/pgfplots/ymin}) -- (axis cs:60, \pgfkeysvalueof{/pgfplots/ymax});
\end{axis}
\end{tikzpicture}
    \caption{Points on the small \texttt{ProdCons} benchmark depending on the use of \textsc{IncImp}}
\end{figure}

\begin{figure}[H]
    \centering
   \begin{tikzpicture}
\begin{axis}[
    xmin=59.5,
    xlabel={Points},
    ytick={1, 2, 3},
    yticklabels={{\texttt{Simple}}, {\texttt{Lifted}}, {\texttt{Lazy}}},
    xmajorgrids=true,
    grid style=dashed,
    boxplot/draw direction=x,
    height=\dimexpr 1cm * 3 + 1cm\relax,
    width=0.88\linewidth,
    enlarge y limits=0.1,
    y dir=reverse,
    legend style={draw=none},
    legend image post style={draw=none},
]
\addplot[
    boxplot prepared={
        lower whisker=64.13,
        lower quartile=65.2975,
        median=66.365,
        upper quartile=67.53,
        upper whisker=67.87
    },
    fill=blue!60, draw=blue!80!black, fill opacity=0.7
] coordinates {};
\addplot[
    boxplot prepared={
        lower whisker=62.99,
        lower quartile=64.9075,
        median=66.12,
        upper quartile=66.99,
        upper whisker=67.58
    },
    fill=red!60, draw=red!80!black, fill opacity=0.7
] coordinates {};
\addplot[
    boxplot prepared={
        lower whisker=61.02,
        lower quartile=64.465,
        median=65.22,
        upper quartile=65.8025,
        upper whisker=66.32
    },
    fill=green!80!black, draw=green!60!black, fill opacity=0.7
] coordinates {};
\draw[red, thick, dashed] (axis cs:60, \pgfkeysvalueof{/pgfplots/ymin}) -- (axis cs:60, \pgfkeysvalueof{/pgfplots/ymax});
\end{axis}
\end{tikzpicture}
    \caption{Points on the small \texttt{ProdCons} benchmark depending on explanation type}
\end{figure}

\section{Configuration Experiments on \texttt{RCPSPmax}}
\label{sec:rcpspmax}

This section shows the graphs for the configuration experiments on the small \texttt{RCPSPmax} instance set. Almost all configurations outperform the baseline which requires more than 55 points.

\begin{figure}[H]
    \centering
    \begin{tikzpicture}
\begin{axis}[
    xlabel={Points},
    ytick={1, 2, 3},
    yticklabels={{Level 1}, {Level 2}, {Level 3}},
    xmajorgrids=true,
    grid style=dashed,
    boxplot/draw direction=x,
    height=\dimexpr 1cm * 3 + 1cm\relax,
    width=0.88\linewidth,
    enlarge y limits=0.1,
    y dir=reverse,
    legend style={draw=none},
    legend image post style={draw=none},
]
\addplot[
    boxplot prepared={
        lower whisker=52.98,
        lower quartile=57.6325,
        median=59.945,
        upper quartile=62.06,
        upper whisker=65.34
    },
    fill=blue!60, draw=blue!80!black, fill opacity=0.7
] coordinates {};
\addplot[
    boxplot prepared={
        lower whisker=52.92,
        lower quartile=57.8725,
        median=59.91,
        upper quartile=62.2325,
        upper whisker=65.44
    },
    fill=red!60, draw=red!80!black, fill opacity=0.7
] coordinates {};
\addplot[
    boxplot prepared={
        lower whisker=53.12,
        lower quartile=57.655,
        median=60.165,
        upper quartile=62.4125,
        upper whisker=65.39
    },
    fill=green!80!black, draw=green!60!black, fill opacity=0.7
] coordinates {};
\draw[red, thick, dashed] (axis cs:55, \pgfkeysvalueof{/pgfplots/ymin}) -- (axis cs:55, \pgfkeysvalueof{/pgfplots/ymax});
\end{axis}
\end{tikzpicture}
    \caption{Points on the small \texttt{RCPSPmax} benchmark for different levels of constraint acquisition}
\end{figure}

\begin{figure}[H]
    \centering
   \begin{tikzpicture}
\begin{axis}[
    xlabel={Points},
    ytick={1, 2, 3},
    yticklabels={{Lowest}, {Medium}, {Highest}},
    xmajorgrids=true,
    grid style=dashed,
    boxplot/draw direction=x,
    height=\dimexpr 1cm * 3 + 1cm\relax,
    width=0.88\linewidth,
    enlarge y limits=0.1,
    y dir=reverse,
    legend style={draw=none},
    legend image post style={draw=none},
]
\addplot[
    boxplot prepared={
        lower whisker=55.02,
        lower quartile=57.305,
        median=59.345,
        upper quartile=61.9875,
        upper whisker=64.25
    },
    fill=blue!60, draw=blue!80!black, fill opacity=0.7
] coordinates {};
\addplot[
    boxplot prepared={
        lower whisker=52.92,
        lower quartile=56.9475,
        median=60.035,
        upper quartile=62.8825,
        upper whisker=64.57
    },
    fill=red!60, draw=red!80!black, fill opacity=0.7
] coordinates {};
\addplot[
    boxplot prepared={
        lower whisker=54.9,
        lower quartile=58.5275,
        median=60.265,
        upper quartile=62.3975,
        upper whisker=65.44
    },
    fill=green!80!black, draw=green!60!black, fill opacity=0.7
] coordinates {};
\draw[red, thick, dashed] (axis cs:55, \pgfkeysvalueof{/pgfplots/ymin}) -- (axis cs:55, \pgfkeysvalueof{/pgfplots/ymax});
\end{axis}
\end{tikzpicture}
    \caption{Points on the small \texttt{RCPSPmax} benchmark for different priorities for bound propagation}
\end{figure}

\begin{figure}[H]
    \centering
   \begin{tikzpicture}
\begin{axis}[
    xlabel={Points},
    ytick={1, 2, 3},
    yticklabels={{Lowest}, {Medium}, {Highest}},
    xmajorgrids=true,
    grid style=dashed,
    boxplot/draw direction=x,
    height=\dimexpr 1cm * 3 + 1cm\relax,
    width=0.88\linewidth,
    enlarge y limits=0.1,
    y dir=reverse,
    legend style={draw=none},
    legend image post style={draw=none},
]
\addplot[
    boxplot prepared={
        lower whisker=55.02,
        lower quartile=57.1225,
        median=59.565,
        upper quartile=61.4525,
        upper whisker=63.01
    },
    fill=blue!60, draw=blue!80!black, fill opacity=0.7
] coordinates {};
\addplot[
    boxplot prepared={
        lower whisker=54.96,
        lower quartile=58.105,
        median=60.28,
        upper quartile=62.8825,
        upper whisker=65.44
    },
    fill=red!60, draw=red!80!black, fill opacity=0.7
] coordinates {};
\addplot[
    boxplot prepared={
        lower whisker=52.92,
        lower quartile=57.4175,
        median=60.16,
        upper quartile=63.0675,
        upper whisker=64.57
    },
    fill=green!80!black, draw=green!60!black, fill opacity=0.7
] coordinates {};
\draw[red, thick, dashed] (axis cs:55, \pgfkeysvalueof{/pgfplots/ymin}) -- (axis cs:55, \pgfkeysvalueof{/pgfplots/ymax});
\end{axis}
\end{tikzpicture}
    \caption{Points on the small \texttt{RCPSPmax} benchmark for different priorities for Boolean propagation}
\end{figure}

\begin{figure}[H]
    \centering
   \begin{tikzpicture}
\begin{axis}[
    xlabel={Points},
    ytick={1, 2},
    yticklabels={{\textsc{IncImp} off}, {\textsc{IncImp} on}},
    xmajorgrids=true,
    grid style=dashed,
    boxplot/draw direction=x,
    height=\dimexpr 1cm * 2 + 1.5cm\relax,
    width=0.88\linewidth,
    enlarge y limits=0.1,
    y dir=reverse,
    legend style={draw=none},
    legend image post style={draw=none},
]
\addplot[
    boxplot prepared={
        lower whisker=60.13,
        lower quartile=61.39,
        median=62.22,
        upper quartile=63.41,
        upper whisker=65.44
    },
    fill=blue!60, draw=blue!80!black, fill opacity=0.7
] coordinates {};
\addplot[
    boxplot prepared={
        lower whisker=52.92,
        lower quartile=56.64,
        median=57.7,
        upper quartile=58.62,
        upper whisker=59.82
    },
    fill=red!60, draw=red!80!black, fill opacity=0.7
] coordinates {};
\draw[red, thick, dashed] (axis cs:55, \pgfkeysvalueof{/pgfplots/ymin}) -- (axis cs:55, \pgfkeysvalueof{/pgfplots/ymax});
\end{axis}
\end{tikzpicture}
    \caption{Points on the small \texttt{RCPSPmax} benchmark depending on the use of \textsc{IncImp}}
\end{figure}

\begin{figure}[H]
    \centering
   \begin{tikzpicture}
\begin{axis}[
    xlabel={Points},
    ytick={1, 2, 3},
    yticklabels={{\texttt{Simple}}, {\texttt{Lifted}}, {\texttt{Lazy}}},
    xmajorgrids=true,
    grid style=dashed,
    boxplot/draw direction=x,
    height=\dimexpr 1cm * 3 + 1cm\relax,
    width=0.88\linewidth,
    enlarge y limits=0.1,
    y dir=reverse,
    legend style={draw=none},
    legend image post style={draw=none},
]
\addplot[
    boxplot prepared={
        lower whisker=54.9,
        lower quartile=57.995,
        median=59.91,
        upper quartile=61.9975,
        upper whisker=63.09
    },
    fill=blue!60, draw=blue!80!black, fill opacity=0.7
] coordinates {};
\addplot[
    boxplot prepared={
        lower whisker=52.92,
        lower quartile=57.1225,
        median=59.835,
        upper quartile=60.945,
        upper whisker=63.92
    },
    fill=red!60, draw=red!80!black, fill opacity=0.7
] coordinates {};
\addplot[
    boxplot prepared={
        lower whisker=55.49,
        lower quartile=58.485,
        median=60.58,
        upper quartile=63.5075,
        upper whisker=65.44
    },
    fill=green!80!black, draw=green!60!black, fill opacity=0.7
] coordinates {};
\draw[red, thick, dashed] (axis cs:55, \pgfkeysvalueof{/pgfplots/ymin}) -- (axis cs:55, \pgfkeysvalueof{/pgfplots/ymax});
\end{axis}
\end{tikzpicture}
    \caption{Points on the small \texttt{ProdCons} benchmark depending on explanation type}
\end{figure}

\section{Submission Information}
\subsection*{Author Contributions}

This paper is an extension of an original conference paper~\cite{feydy2008global} written by Thibaut Feydy, Andreas Schutt and Peter J. Stuckey; Thibaut Feydy has meanwhile left academia.
The extended paper details the full new implementation in a new learning solver Huub~\cite{cp2025c}.
Lucas Kletzander developed the difference logic implementation in Huub  with help from Jip J. Dekker, the lead developer of Huub. Lucas performed experiments and wrote Section 5. The remaining authors were all involved in discussions on algorithmic design and edited the paper substantially.  

\subsection*{Compliance with Ethical Standards}

As a commentary paper there are no conflicts of interest, or other complex ethical standards to apply. Generative AI was only used to generate scripts for data collection and generation of figures and tables. All scripts and outputs were checked by the authors for correctness.

\subsection*{Competing Interests}

Not applicable, although Peter J. Stuckey sits on the Editorial board for Constraints.

\subsection*{Data Availability Declaration}

The solvers used in the evaluation are available as open source. The detailed result tables are available as online supplementary material.


\end{document}